\documentclass[letterpaper]{article} 

\usepackage{aaai2026}
\usepackage{times}  
\usepackage{helvet}  
\usepackage{courier}  
\usepackage[hyphens]{url}  
\usepackage{graphicx} 
\usepackage{natbib}  
\usepackage{caption} 
\usepackage{comment}

\usepackage{algorithm}
\usepackage{algorithmic}
\usepackage{newfloat}
\usepackage{listings}
\DeclareCaptionStyle{ruled}{labelfont=normalfont,labelsep=colon,strut=off} 
\floatstyle{ruled}
\newfloat{listing}{tb}{lst}{}
\floatname{listing}{Listing}
\usepackage{amsmath,amssymb,amsfonts,amsthm}  
\newtheorem{assumption}{Assumption}
\newtheorem{definition}{Definition}
\newtheorem{theorem}{Theorem}

\newtheorem{lemma}{Lemma}
\newtheorem{corollary}{Corollary}
\newtheorem{remark}{Remark}

\usepackage{booktabs,multirow,colortbl,xcolor} 
\usepackage{pifont} 
\usepackage{tikz}
\usetikzlibrary{arrows.meta,positioning,calc,decorations.pathreplacing}

\usepackage{pifont}           
\newcommand{\cmark}{\ding{51}}  
\newcommand{\xmark}{\ding{55}}  

\title{Constrained Online Convex Optimization with Memory and Predictions}

\author{
Mohammed Abdullah\textsuperscript{\rm 1,\rm 2},
George Iosifidis\textsuperscript{\rm 3},
Salah Eddine Elayoubi\textsuperscript{\rm 1},
Tijani Chahed\textsuperscript{\rm 2}
}
\affiliations{
\textsuperscript{\rm 1}Université Paris-Saclay, CentraleSupélec, CNRS, L2S, Gif-sur-Yvette, France\\
\textsuperscript{\rm 2}Institut Polytechnique de Paris, Télécom SudParis, Palaiseau, France\\
\textsuperscript{\rm 3}Delft University of Technology, The Netherlands\\
\{mohammed.abdullah, salaheddine.elayoubi\}@centralesupelec.fr,\\
 G.Iosifidis@tudelft.nl, tijani.chahed@telecom-sudparis.eu
}

\def\c{\mathcal}

\def\c{\mathcal}

\def\sumT{\sum_{t=1}^T}
\usepackage{bibentry}

\begin{document}

\maketitle

\begin{abstract}
We study Constrained Online Convex Optimization with Memory (COCO-M), where both the loss and the constraints depend on a finite window of past decisions made by the learner. This setting extends the previously studied unconstrained online optimization with memory framework and captures practical problems such as the control of constrained dynamical systems and scheduling with reconfiguration budgets. For this problem, we propose the first algorithms that achieve sublinear regret and sublinear cumulative constraint violation under time-varying constraints, both with and without predictions of future loss and constraint functions. Without predictions, we introduce an adaptive penalty approach that guarantees sublinear regret and constraint violation. When short-horizon and potentially unreliable predictions are available, we reinterpret the problem as online learning with delayed feedback and design an optimistic algorithm whose performance improves as prediction accuracy improves, while remaining robust when predictions are inaccurate. Our results bridge the gap between classical constrained online convex optimization and memory-dependent settings, and provide a versatile learning toolbox with diverse applications.
\end{abstract}

\section{Introduction}

Online Convex Optimization (OCO) is the workhorse model for sequential decisions under adversarial uncertainty. In its basic version, a learner picks an decision $x_t$ from a convex set $\mathcal X$ at the start of each round $t$; an adversary then reveals a convex loss function \(f_t\!:\!\mathcal X\!\mapsto\!\mathbb{R}\) and the learner suffers \(f_t(x_t)\). The learning is assessed by the metric of regret $\mathcal R_T$, i.e., the distance of the accumulated loss from that of the best-in-hindsight decision $x^\star\!=\!\arg\min_{x\in\mathcal X} \sum_{t=1}^{T} f_t(x)$, and the goal is to ensure sublinear regret. Since its conception \cite{zinkevich}, OCO has been extended to an impressive range of problems \cite{orabona2019modern}.

One of these extensions is OCO with memory (OCO-M), where the loss at each round $t$ depends on the previous $m$ decisions of the learner $x_{t-m}, \ldots, x_t$. Data caching with fetching costs, communication systems with reconfiguration delays, user-engagement in recommender systems, investment portfolio selection, and model training in continual learning, are only some of the problems that can be tackled with OCO-M \cite{anava2015online}. Further, the recent online non-stochastic control (NSC) framework owes its success to OCO-M, as such stateful systems can be controlled with memory-based functions \cite{hazan2025introductiononlinecontrol}.

Beyond minimizing losses, real-world systems must often satisfy average constraints of time-varying functions, $g_t(x_t)\!\leq\! 0$. This constrained OCO (COCO) extension is attracting growing interest and has several flavors. In some cases the goal is to ensure sublinear long-term violation (LTV), $\sum_t g_t(x_t)$, and in others to bound the cumulative constraint violation (CCV), $\sum_t \max\{g_t(x_t),0\}$; also, the functions may be known when $x_t$ is decided or not, and they might be static $g_t\!=\!g$, stochastically-perturbed or selected by an adversary. Importantly, similarly to losses, the constraints may exhibit memory and depend jointly on the $m$ recent decisions. Examples include energy budget constraints over $m$-slot windows in smart grid, thermal envelopes that integrate recent power inputs in processors, QoE user metrics capturing service volatility, and battery-health limits tied to cumulative depth-of-discharge, to mention only few. Apart from such operational or resource constraints, LTV and CCV are also relevant for multi-criteria optimization.

It is clear from the above that COCO-M is an important and practical extension of OCO which, nevertheless, remains largely unexplored. This work contributes in addressing this gap by studying several instances of this problem.

\textbf{Contributions.} We study the most compounded version of COCO-M where the constraints are unknown and adversarially varying, and we are interested in CCV, which we denote $\mathcal V_T$. We consider two problems, one with memory effects on losses and constraints (\texttt{COCO-M}$^2$), and another with memory-less constraints (\texttt{COCO-M}). Following a penalty-based relaxation analysis \cite{zangwill} we design an algorithm that achieves regret
$\mathcal R_T\!=\!\mathcal{O}\!\left(m^{3/2}\sqrt{T\log T}\right)$, and CCV $\mathcal V_T\!=\!\mathcal{O}\bigl(\max\{T^{3/4},\,m^{3/2}\sqrt{T\log T}\}\bigr)$ for \texttt{COCO-M}$^2$, improving upon the \(\mathcal R_T, \mathcal V_T\!=\!\mathcal{O}(T^{2/3}\log^2\!T)\) bounds of \cite{liu2023online} for $T \in [3,10^{49}]$, the only prior work related to \texttt{COCO-M}$^2$. For \texttt{COCO-M}, we achieve $\mathcal V_T\!=\! \mathcal O(T^{3/4})$ which improves to $\mathcal V_T\!=\!\mathcal O(m^{3/2}\sqrt{T}\log T)$ for short memory.

We take the next step and study, for the first time, the problem through the lens of \emph{optimistic learning} (OL), \cite{rakhlin2013optimization}. That is, we assume the availability of untrusted predictions about the gradients of forthcoming losses and constraints, and design an algorithm that, under a more restrictive benchmark than in the no-prediction setting, ensures
\begin{equation*}
\mathcal R_T\!=\! \mathcal{O}\left(\sqrt{\mathcal{E}_T(f)}\right),\ \mathcal V_T\!=\!\mathcal{O}\left(\left(\sqrt{\mathcal{E}_T(g^+)}\!+\!m\right) \log T\right), 
\end{equation*}
where \(\mathcal{E}_{T}(f)\) and \(\mathcal{E}_{T}(g^+)\) denote the total prediction errors for the loss and constraint functions over the $T$ rounds. These bounds diminish with the predictions' accuracy, becoming $\mathcal R_T\!=\!\mathcal O(\log T)$, $\mathcal V_T\!=\!\mathcal{O}(\!m \log{T})$ for perfect predictions, and $\mathcal{R}_T\!=\!\mathcal{O}(m^2\sqrt{T})$, $\mathcal{V}_T\!=\!\mathcal{O}(m^2\sqrt{T}\log(T))$ when the prediction fail maximally. These rates subsume the optimistic COCO bounds \emph{without memory} \cite{lekeufack2024optimistic}; and the optimistic \emph{unconstrained} OCO-M bounds \cite{mhaisen2024optimistic}.

To streamline the presentation of the material, we defer all proofs to the Appendix where, the interested reader, can also find extensive discussion of related work, analysis of special problem cases and numerical experiments.

\section{Related work}\label{sec:State-of-art}
\paragraph{COCO Bounds.}
The COCO literature falls in two strands.
First, works that assume constraints are \emph{static or known}. The earliest work here \cite{mahdavi2012trading}, considers fixed affine constraints; \cite{chaudhary2022safe} study fixed but unknown constraints observed via stochastic feedback; and \cite{qiu2023gradient}, \cite{yu2020low} address static unknown constraints to get LTV $\mathcal{O}(1)$. This regime is less relevant to our setting but provides useful insights. The second strand considers constraints that are both \emph{time-varying and unknown}. Here, \cite{guo2022online} match the \(\mathcal{O}\!\bigl(\sqrt{T}\bigr)\) regret and obtain \(\mathcal{O}(T^{\tiny{3/4}})\) CCV. Other papers, e.g., \cite{wang2025revisiting} focused on projection free algorithms for this problem.
\cite{sinha2024optimal} achieve \(\mathcal{O}\!\bigl(\sqrt{T}\bigr)\) regret with \(\mathcal{O}\!\bigl(\sqrt{T}\log T\bigr)\) CCV. For a dynamic benchmark, \cite{wang2025doubly} provide a bound of \(\mathcal{O}\!\bigl(T^{(1+V_x)/2}\bigr)\) and \(\mathcal{O}\!\bigl(T^{V_g}\bigr)\) violation, where \(V_x\) and \(V_g\in[0,1]\) quantify the functions variability. However, there are no OCO-M papers with either CCV or LTV.

\paragraph{COCO-M and Non-stochastic Control.} Since their introduction by \cite{agarwal2019logarithmic}, NSC methods have used disturbance–action (DAC) policies: the control at round~$t$ is a weighted sum of the last~$m$ disturbances, and these weights are learned through OCO‑M. In that sense NSC is relevant to our study. However, most NSC papers impose \emph{deterministic} constraints on the state and input~\cite{jiang2025online,li2021online,nonhoff2021online,yan2023online} or assume the constraint at ~$t{+}1$ is revealed one round before~\cite{zhou2023safe}; in both cases the goal is per\textendash round feasibility rather than an average\textendash sense guarantee.  
The sole exception is \cite{liu2023online}, who analyze a fully adversarial setting and obtain regret and CCV bounds of $\mathcal{O}\!\bigl(T^{2/3}\log^{2}T\bigr)$ when the memory length is fixed to $m=\log T$.  
Our work tightens these bounds to $\mathcal{O}\!\bigl(m^{3/2}\sqrt{T\log T}\bigr)$ regret and $\mathcal{O}\!\bigl(m^{3/2}\sqrt{T\log T}\,\vee\,T^{3/4}\bigr)$ CCV. Besides, our bounds hold for any memory length $m$.

\paragraph{Optimism.} Look-ahead gradients can compress regret in proportion to their prediction error. In OCO, this is well studied \cite{rakhlinNeurips13,mohri2016accelerating,JOULANI2020108,flaspohler2021online}. 
Extending optimism to OCO-M is harder because one must predict farther than the next round; \cite{mhaisen2024optimistic} is the only work that tackles this challenge for linear losses in NSC under \emph{imperfect} predictions. 
In online control, perfect look-ahead predictions can yield exponential improvements in dynamic regret. For example, Yu et al. \cite{yu2020power} analyze quadratic, time-invariant losses with adversarial disturbances under model-predictive control, and Li et al. \cite{li2019online} study time-varying convex losses without disturbances. In both cases, the dynamic-regret bound decreases exponentially with the length of the prediction window. Predictions may also be viewed through the lens of “context” \cite{li2022robustness} in stochastic MDPs with presume finite states and action sets. Lastly, prior works \cite{yu2022competitive,zhang2021regret} assume full‑horizon predictions, blocking the learner from using updates; we instead let it incorporate the latest forecasts each round.

Finally, OL in COCO remains surprisingly sparse. \cite{anderson2023lazy} proposed a primal-dual algorithm to achieve $\mathcal R_T\!=\!\mathcal{O}(1)$ and LTV $\mathcal{O}\!\bigl(\sqrt T\bigr)$ under perfect predictions;
\cite{ijcai2025p776} achieves $\mathcal{R}_T=\mathcal{O}(\sqrt{V_T})$, where $V_T$ captures the aggregate variation of successive gradients, and grantees \(\mathrm{LTV}=\mathcal{O}(1)\) under the Slater condition,
while \cite{lekeufack2024optimistic} used instead a penalty method, attaining $\mathcal R_T\!=\!\mathcal{O}\!\bigl(\sqrt{\bar{\mathcal{E}}_{T}(f)} \bigr)$ and CCV $\mathcal{O}\!\bigl(\log T(\sqrt{\bar{\mathcal{E}}_{T}(g^+)}  +1)\bigr)$, where $\bar{\mathcal{E}}_{T}(f)$ and $\bar{\mathcal{E}}_{T}(g^+)$ denote the prediction errors. None of these works consider memory in the objective or constraints.

\paragraph{Summary.} OCO-M with adversarial time-varying cumulative constraints has only been studied in NSC by \cite{liu2023online}, which we strictly outperform for $T \in [1,10^{49}]$. The optimistic version of the problem is introduced by this work, and we obtain bounds which for $m\!=\!0$ match the OL COCO results \cite{lekeufack2024optimistic}. A summary of the most relevant COCO results is provided in the appendix (Table \ref{tab:related-work}).

\section{Preliminaries}\label{sec:prelimi}

\textbf{Notation.} The diameter of a non-empty, closed and convex decision set \(\mathcal{X}\subset\mathbb{R}^{d}\),
is defined as $\|\mathcal{X}\|\!\doteq \sup_{x,y\in\mathcal{X}} \|x\!-\!y\|$, where \(\|\cdot\|\) is the $\ell_2$ norm.
We write $\c T\!:=\!\{m,\dots,T\}$ and at each round $t\in \c T$ the learner selects an decision $x_t\in \c X$. The \emph{memory length} is denoted with $m$, and we use:
\begin{equation}
x_{t-m}^{t}\;\doteq\;(x_{t-m},\dots,x_t)\in \c X^{m+1}, \quad x_{a:b}\doteq\sum_{i=a}^{b}x_i.\notag
\end{equation}
For a function $f_t(x_{t-m},\dots,x_t)$ we define its memory-less version $\hat{f}_t(x_t)\!\doteq\!f_t(x_{t},\dots,x_t)$ with $\hat{f}_t:\!\mathcal{X}\mapsto\! \mathbb{R}$, and its prediction is denoted by $\tilde{f_t}$. We use $f_t^+(x)\!\doteq\! \max\big\{f_t(x),0\big\}$, and abuse notation to denote $\nabla f_t(x_t)$ the gradient of $f_t$ at $x_t$ or its subgradient if it is non-differentiable.

\paragraph{Background.} In OCO the learner selects an decision \(x_t\in\mathcal{X}\) before the convex loss \(f_t:\mathcal{X}\!\mapsto\!\mathbb{R}\) is revealed. The performance of the learner is measured with the regret:
\begin{align}
\texttt{OCO}: \quad	\c R_T=\sumT f_t(x_t)-\min_{x\in \c X} \sumT f_t(x),
\end{align}
and the learner wishes to achieve $\lim_{T\rightarrow \infty} \c R_T/T=0$, for any possible sequence of loss functions $\{f_t\}_t$.

In a more recent extension of this framework, the learner's decisions need additionally to satisfy a time-average (budget) of constraints $g_t:\mathcal{X}\mapsto \mathbb{R}$ $\forall t$. In this Constrained OCO (COCO) formulation, the regret is defined as:
\begin{align}
\c R_T^c=\sumT f_t(x_t)-\min_{x\in \c X_T} \sumT f_t(x),
\end{align}
where the set of eligible decisions is modified to: 
\begin{align}
	\c X_T=\Big\{ x\in \c X \ \Big \vert \ g_t(x)\leq 0, \forall t\in \c T \Big\}.
\end{align}
Observe that we restrict the benchmark to satisfy the constraints at every round and not on average; a necessary compromise to avoid the impossibility result of COCO, cf. \cite{mannor2009online}. The learner here aims to achieve sublinear regret \emph{and} constraint violation:
\begin{align}
\texttt{COCO}: \ \		\c R_T^c=o(T), \   \ \ \c V_T^c\triangleq\sumT g_t^+(x_t)=o(T).
\end{align}

In this work we are interested in functions with $m$-length memory where the loss $f_t:\mathcal X^{m+1}\!\mapsto\! \mathbb R$ at each round $t$, depends on the previous $m\!>\!0$ decisions of the learner. Following \cite{anava2015online}, the regret for this OCO-M problem is defined as: 
\begin{align}
\!\!\texttt{OCO-M}:	\c R_T^m\!=\!\sum_{t=m}^T f_t(x_{t-m}^t) \!-\! \min_{x\in \c X}\sum_{t=m}^T\! f_t(x, \ldots,x),
\end{align}
where the benchmark is defined using the respective \emph{memory-less} functions $\hat f_t(x)\doteq f_t(x,\ldots, x)$ that are assumed convex. In this work, we make a further step and introduce the $\texttt{COCO-M}^2$ framework, where the constraints also exhibit memory effects, $g_t:\mathcal{X}^{m+1}\mapsto \mathbb{R}$. We thus define: 
\begin{align}
\quad \c R_T^{mc}=\sum_{t=m}^T f_t(x_{t-m}^t) - \min_{x\in \c X_T^m}\sum_{t=m}^T f_t(x, \ldots,x),
\end{align}
where the set of eligible decisions is:
\begin{align}
	\c X_T^m=\Big\{ x\in \c X \ \Big \vert \ g_t(x,\ldots,x)\leq 0, \forall t\in \c T \Big\}
\end{align}
and, as in the typical COCO, the learner aims to achieve:
\begin{align}\label{coco-m2}
\texttt{COCO-M}^2: \ \c R_T^{mc}, \ \c V_T^{mc}\doteq\sumT g_t^+(x_{t-m}^t)=o(T).
\end{align}
Finally, the COCO setting where only the loss functions have memory while the constraints, $g_t:\mathcal X\mapsto \mathbb R$, are memoryless, is of independent interest. In this case, the definition of regret  $\c R_T^{mc}$ remains the same, but the benchmark is $x^\star \in \c X_T$, and the constraint violation changes to:
\begin{align}\label{coco-m}
\texttt{COCO-M}:& \ \c R_T^{mc}, \ \c V_T^{c}\doteq\sumT g_t^+(x_t)=o(T).
\end{align}
We denote this problem as $\texttt{COCO-M}$ to distinguish it from the above problem with \emph{double} memory ($\texttt{M}^2$).

Regarding the solution of COCO problems, the main techniques use a simple idea: apply an OCO algorithm on some type of (time-varying) Lagrange function, $\c L_t(\cdot)$, that scalarizes the objective and constraints. These techniques can be classified in two broad categories. Those that introduce explicit dual variables $\mu$ and perform primal-dual iterations on $\mathcal{L}_t(x, \mu)$, i.e., employ coordinated learning in the primal and dual space, e.g., see \cite{lamperski, valls}. And the second category that draws from penalty methods \cite{zangwill} and creates again a Lagrange-type function, $\c L_t(x)$, where the constraint violation is penalized with some parameter \cite{pmlr-v97-liakopoulos19a,doug-ecml22, lekeufack2024optimistic, sinha2024optimal}. We follow this latter approach.

\paragraph{Learning Model \& Assumptions.} We consider the most general COCO model where both the loss and the constraint functions may change over time, and in each round they are revealed \emph{after} the learner commits its decision. Regarding the adversary model, we follow \cite{anava2015online,merhav2002sequential,gyorgy2014near} and consider an oblivious adversary which implies that these functions are determined in advance (i.e., at $t=0$) but, of course, are not revealed. Finally, we consider full-information feedback for all the arguments of the memory functions; see Fig. \ref{fig:model}. We also use the following standard OCO assumptions.
\begin{assumption}\label{ass:domain}
	$\c X$ is closed and convex, with $\|\c X\|<\infty$.
\end{assumption}

\begin{assumption}\label{ass:convex}
	For every $t$, functions ${f}_t(\cdot)$ and ${g}_t(\cdot)$ are convex and $F$-, $G$-bounded, respectively. 
\end{assumption}

\begin{assumption}[Lipschitz continuous ]\label{ass:lipschitz}
For every time $t\in \c T$, let 
$f_t, g_t:\mathcal{X}^{m+1}\!\mapsto\mathbb{R}$ 
and  
$\tilde f_t,\tilde g_t:\mathcal{X}^{m+1}\!\mapsto\mathbb{R}$.

\begin{figure}[t]
\begin{center}
\includegraphics[width=0.8\columnwidth]{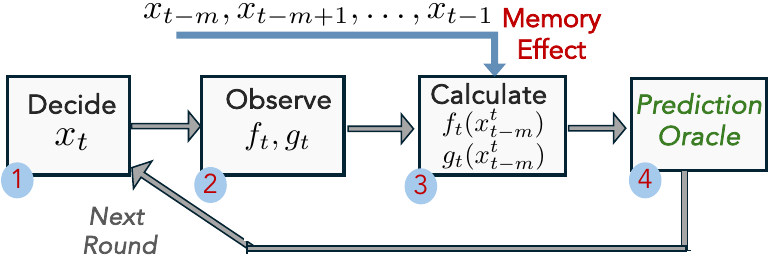}
\end{center}
\vspace{-2mm}
\caption{{decision stages of \texttt{COCO-M} (with predictions).}} 
\label{fig:model}
\end{figure}

All functions are Lipschitz continuous, i.e., there exist finite constants $L_{t,f},L_{t,g}\ge0$ such that,
$\forall \mathbf{x,y} \in\mathcal X^{m+1}$,
\begin{align*}
\Bigl| f_t(\mathbf x)- f_t(\mathbf y)\bigr|,\;
\bigl|\tilde f_t(\mathbf x)-\tilde f_t(\mathbf y)\bigr|
&\le L_{f}\,\|\mathbf x- \mathbf y\|,\\[4pt]
\bigl| g_t(\mathbf x)- g_t(\mathbf y)\bigr|,\;
\bigl|\tilde g_t(\mathbf x)-\tilde g_t(\mathbf y)\bigr|
&\le L_{g}\,\|\mathbf x-\mathbf y\|.
\end{align*}
\end{assumption}

Finally, the following remark is without loss of generality.
\begin{remark}
    If we have $n$ constraints $g_{t,k}$ with $k=1,\ldots,n$, we can follow the standard approach and define $g_t:=\max_{k}g_{t,k}$. Hence, to streamline the presentation, we consider constraints that map to $\mathbb R$, not to $\mathbb R^n$. 
\end{remark}

\section{Algorithm Design \& Analysis}\label{sec:COCO with memory without predic}

Our method is closer to penalty COCO techniques, while the memory effect is handled as in \cite{anava2015online}, i.e., we perform the analysis using the memory-less functions and lift the results to the original problem. Namely, we use the memory-less versions of \(f_t,g_t:\c X^{m+1}\!\mapsto \mathbb{R}\), which are defined on $\mathcal{X}$ as:
\[
\hat f_t(x):=f_t(x,\dots,x),
\qquad
\hat g_t(x):=\;g_t(x,\dots,x),
\]
and following \cite{lekeufack2024optimistic,sinha2024optimal}, we define the (memory-less) surrogate function:
\begin{equation}\label{eq:surrogate}
	\hat{\mathcal{L}}_t(x)
	\;=\;
	\hat f_t(x)
	\;+\;
	\Phi'\!\bigl(\hat V_{t}\bigr)\,
	\hat g_t^{+}(x),
\end{equation}
where $\Phi:\!\mathbb R_+\!\mapsto\! \mathbb R$ is a non-negative, convex monotonically increasing \emph{penalty} function with $\Phi(0)\!=\!0$, and $\Phi^\prime$ its derivative. Observe that \(\Phi\) is applied to the cumulative memory-less constraint violation which is denoted \(\hat{V}_t\) and defined: 
\[
  \hat{V}_t=
  \begin{cases}
    \hat{\mathcal{V}}^{\,\mathrm{mc}}_t, & \text{in }\texttt{COCO-M}^2,\\
 \mathcal{V}^{\,\mathrm{c}}_t,      & \text{in }\texttt{COCO-M}.
  \end{cases}
\]
This quantity evolves with time as:
\[
  \hat{V}_t=
  \begin{cases}
    \hat V_{t-1}+\hat g_{t}^{+}(x_{t}), & \text{in }\texttt{COCO-M}^2,\\
\hat V_{t-1}+ g_{t}^{+}(x_{t}),      & \text{in }\texttt{COCO-M}.
  \end{cases}
\]
The intuition of  $\hat{\mathcal{L}}_t$ is that we penalize the violation of constraints at round $t$, with a penalty commensurate to the accumulated constraint violation. Different from \cite{lekeufack2024optimistic}, \cite{sinha2024optimal}, we use: 
$$\Phi(\mathrm{V})=\lambda V^2,$$ 
with parameter $\lambda\!>\!0$ determined below. This penalty leads to tighter bounds compared to the exponential function in those prior works; we revisit this discussion later.  

Now, focusing on the surrogate function, we observe that it operates on decisions drawn from the convex set $\c X$, and under Assumptions 1-3 it is convex with bounded gradients for fixed $T$. Namely, by the triangle inequality, we get:
\begin{equation}\label{eq:grad_bound}
\sup_{t,x}\bigl\|\nabla\hat{\mathcal{L}}_t(x)\bigr\|\;\le\; L_f + \Phi'(V_T) L_g=L_f+ 2\lambda V_T L_g.
\end{equation}
Therefore, performing Online Gradient Descent (OGD), i.e.,
\begin{equation}\label{eq:COCO-update}
	x_{t+1}	= \mathcal{P}_{\mathcal{X}} \Big(x_t - \eta_t \nabla \hat{\mathcal{L}}_t(x_t) \Big),
\end{equation}
where $\mathcal P_{\mathcal X}$ is the $\ell_2$ projection on $\mathcal X$, ensures sublinear regret for these surrogate functions, which we denote $\mathcal{R}_T(\hat{\mathcal{L}})$.

\begin{lemma}\label{lem:OGD_adaptive_bound}
Under Assumptions 1-3, performing OGD on \eqref{eq:surrogate} with step $\eta_t\!=\!\frac{\sqrt{2}\|\mathcal{X}\|}{2\sqrt{\sum_{\tau=1}^{t} \|\nabla \hat{\mathcal{L}}_t(x_t)\|^2}}$, we obtain:
\begin{align*}
\c R_T(\hat{\c L}) &\triangleq \sum_{t=1}^{T}\hat{\mathcal{L}}_t(x_t) - \min_{x\in\mathcal{X}_T^{\mathrm{b}}} \sum_{t=1}^{T}\hat{\mathcal{L}}_t(x)\\& \leq \sqrt{2}\| \mathcal{X}\| \sqrt{\sum_{t=1}^{T} \| \nabla \hat{\c L}_t(x_t)\|^2}
\end{align*}
where the benchmark set depends on the problem:
\(\mathcal{X}_T^{\mathrm{b}}=\mathcal X_T^{m}\) for \texttt{COCO-M}$^2$
and \(\mathcal{X}_T^{\mathrm{b}}=\mathcal X_T\) for \texttt{COCO-M}.
\end{lemma}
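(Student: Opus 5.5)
The plan is to reduce the claim to the standard adaptive-stepsize OGD regret bound of Streeter–McMahan/Orabona type, run against the fixed sequence of convex functions $\hat{\mathcal L}_t$. The comparator is restricted to $\mathcal X_T^{\mathrm b}\subseteq\mathcal X$, so it suffices to prove the bound for an arbitrary fixed $u\in\mathcal X$ and then take the minimum over the smaller set. Convexity of $\hat{\mathcal L}_t$ comes from three facts: $\hat f_t,\hat g_t$ are convex (restrictions of convex functions to the diagonal), $\hat g_t^+$ is a maximum of convex functions, and the coefficient $\Phi'(\hat V_t)\ge 0$. A subtlety is that $\hat V_t$ involves $x_t$, and possibly $x_{t-m}^t$ in the memory case. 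However, once $x_t$ is chosen the coefficient is a fixed nonnegative scalar. The regret statement should therefore be read with $\Phi'(\hat V_t)$ frozen, making $\hat{\mathcal L}_t$ a fixed convex function of $x$ revealed after $x_t$. Writing $g_t:=\nabla\hat{\mathcal L}_t(x_t)$, linearization gives
\[
\hat{\mathcal L}_t(x_t)-\hat{\mathcal L}_t(u)\le \langle g_t, x_t-u\rangle .
\]

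The second step is the one-step projection inequality. By nonexpansiveness of $\mathcal P_{\mathcal X}$ (Assumption 1),
\[
\|x_{t+1}-u\|^2\le \|x_t-u\|^2-2\eta_t\langle g_t,x_t-u\rangle+\eta_t^2\|g_t\|^2,
\]
which rearranges to
\[
\langle g_t,x_t-u\rangle\le \frac{\|x_t-u\|^2-\|x_{t+1}-u\|^2}{2\eta_t}+\frac{\eta_t}{2}\|g_t\|^2 .
\]
Summing over $t$ and telescoping with nonincreasing $\eta_t$ (true since the cumulative gradient norm grows), together with $\|x_t-u\|\le\|\mathcal X\|$, bounds the first part by $\|\mathcal X\|^2/(2\eta_T)$.

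The third step handles the second part with the standard lemma $\sum_{t=1}^T a_t/\sqrt{\sum_{\tau\le t}a_\tau}\le 2\sqrt{\sum_{t\le T}a_t}$, applied with $a_t=\|g_t\|^2$. Let $S_T:=\sum_{t}\|g_t\|^2$ and take $\eta_t=\|\mathcal X\|/\sqrt{2S_t}$ (the stated step, with the index in the denominator read as $\tau$). Then
\[
\frac{\|\mathcal X\|^2}{2\eta_T}=\frac{\|\mathcal X\|\sqrt{S_T}}{\sqrt2},\qquad \sum_t\frac{\eta_t}{2}\|g_t\|^2\le\frac{\|\mathcal X\|}{2\sqrt2}\cdot 2\sqrt{S_T}=\frac{\|\mathcal X\|\sqrt{S_T}}{\sqrt2}.
\]
The total is $\sqrt2\|\mathcal X\|\sqrt{S_T}$, as claimed. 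Rounds with $S_t=0$ are trivial, with zero gradient and any step. The gradient norms are finite by the bound \eqref{eq:grad_bound}.

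The main obstacle is not the OGD arithmetic but justifying that the adaptive penalty does not break the OCO model. The coefficient $\Phi'(\hat V_t)$ depends on the learner's own current decision, so $\hat{\mathcal L}_t$ is not fixed by the oblivious adversary in advance. I would argue that the OGD analysis above is pathwise and deterministic: it only uses convexity of the realized functions at each round and never needs them to be independent of $x_t$. The proof therefore holds verbatim for adaptively generated convex functions, and so for any comparator in $\mathcal X_T^{\mathrm b}$.
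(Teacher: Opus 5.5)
Your proof is correct and is the standard adaptive-step OGD argument: the projection inequality, telescoping under nonincreasing $\eta_t$ with $\|x_t-u\|\le\|\mathcal X\|$, and $\sum_t a_t/\sqrt{\sum_{\tau\le t}a_\tau}\le 2\sqrt{\sum_t a_t}$. The paper invokes this result without writing out a proof, and your constants correctly reproduce $\sqrt{2}\|\mathcal X\|\sqrt{\sum_t\|\nabla\hat{\mathcal L}_t(x_t)\|^2}$. Your remark that the analysis is pathwise, so the $x_t$-dependent coefficient $\Phi'(\hat V_t)$ can be frozen once $x_t$ is played, supplies a justification the paper leaves implicit.
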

In the sequel, we explain how this result sets the basis for tackling the $\texttt{COCO-M}$ and $\texttt{COCO-M}^2$ problems.

\subsection{Problem  $\texttt{COCO-M}^2$: Double Memory Effect} 

We start with the more compounded problem $\texttt{COCO-M}^2$ in \eqref{coco-m2}, having memory in losses and constraints. Without loss of generality, we assume these effects are of the same length $m$. The solution is summarized in Algorithm \ref{algo:memory-coco}. In brief, in each round $t$, the learner commits its decision $x_t\in \c X$, observes the current loss and constraint functions and calculates the loss and constraint violation. Then, it updates the constraint violation, calculates the $t$-round gradient $\nabla \hat{\c L}_t(x_t)$ and uses OGD to devise the next decision.

To characterize the performance of this algorithm, we take two steps. First, we utilize the following \emph{regret decomposition} result, which links the regret of the memory-less Lagrangian $\c R_T(\hat{\c L})$ to the regret of the memory-less loss (denoted $\hat{\c R}_T^c$) and the constraint violation $\hat{{\c V}}_T^{mc}$ over the memory-less functions. 

\begin{lemma}[Regret decomposition \cite{sinha2024optimal}]\label{lem:Regret_decom_no_predic}
For any OCO algorithm, if $\Phi$ is a convex increasing function, we have for any $t\geq m$ and $x^\star \in \mathcal{X}_{T}^{m}$
    \begin{equation}
        \hat{\c R}_T^c + \Phi(\hat{V}_T) -\Phi(\hat{V}_m) \leq \mathcal{R}_T(\hat{\c L}).
    \end{equation}
\end{lemma}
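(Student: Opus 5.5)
The argument follows the standard penalty-based decomposition of \cite{sinha2024optimal}, applied to the memory-less functions $\hat f_t,\hat g_t$. The idea is to write out $\c R_T(\hat{\c L})$ for a fixed comparator, use feasibility of that comparator to remove the penalty term it would incur, and lower-bound the penalty terms at the learner's iterates by increments of $\Phi$ via convexity.

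First, fix any $x^\star\in\mathcal X_T^m$ (for \texttt{COCO-M} take $x^\star\in\c X_T$). By definition of $\mathcal X_T^m$ we have $\hat g_t(x^\star)=g_t(x^\star,\dots,x^\star)\le 0$, hence $\hat g_t^+(x^\star)=0$ for all $t\in\c T$. Expanding the surrogate \eqref{eq:surrogate} and summing over $t=m,\dots,T$ (the indices over which $\hat{\c R}_T^c$ is defined) gives
\begin{align*}
\sum_{t}\hat{\c L}_t(x_t)-\sum_t\hat{\c L}_t(x^\star)=\sum_t\bigl(\hat f_t(x_t)-\hat f_t(x^\star)\bigr)+\sum_t\Phi'(\hat V_t)\,\hat g_t^+(x_t).
\end{align*}
The first sum is $\hat{\c R}_T^c$ against $x^\star$. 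The left-hand side is at most $\c R_T(\hat{\c L})$, because the surrogate regret is a maximum over the benchmark set and $x^\star$ belongs to it. Choosing $x^\star$ as the minimizer in the definition of $\hat{\c R}^c_T$ then leaves only the penalty sum to control.

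Second, bound the penalty sum from below. Since $\Phi$ is convex, for $a\ge b\ge0$ we have $\Phi(a)-\Phi(b)\le \Phi'(a)(a-b)$. Apply this with $a=\hat V_t$ and $b=\hat V_{t-1}$, noting that $\hat V_t-\hat V_{t-1}=\hat g_t^+(x_t)$ (or $g_t^+(x_t)$ in \texttt{COCO-M}). This yields $\Phi'(\hat V_t)\hat g_t^+(x_t)\ge \Phi(\hat V_t)-\Phi(\hat V_{t-1})$. Summing over $t=m+1,\dots,T$ telescopes to $\Phi(\hat V_T)-\Phi(\hat V_m)$. The leftover term at $t=m$ is nonnegative, since $\Phi'\ge0$ (because $\Phi$ is increasing) and $\hat g^+\ge0$, so it can be dropped. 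Combining the two steps gives the claimed inequality.

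The main obstacle is bookkeeping rather than mathematics, and it has two parts. The first is matching the time index sets, since the surrogate regret in Lemma~\ref{lem:OGD_adaptive_bound} runs from $1$ while $\hat{\c R}_T^c$ runs from $m$; this is why the statement subtracts $\Phi(\hat V_m)$ rather than $\Phi(0)$. The second is checking that $\Phi'$ is evaluated at $\hat V_t$ (the updated value) and not at $\hat V_{t-1}$, because the convexity inequality requires the derivative at the right endpoint. If the algorithm instead used $\hat V_{t-1}$, I would need an extra Lipschitz-type correction term. For \texttt{COCO-M} the argument is identical with $\hat g_t$ replaced by the memoryless $g_t$.
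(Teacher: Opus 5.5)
Your proof is correct and follows essentially the same argument the paper relies on: the cited Sinha--Vaze decomposition, which the appendix writes out for its delayed-penalty variant, Lemma~\ref{lem:regret-decomp_prediction}. There, convexity gives $\Phi(\hat V_t)-\Phi(\hat V_{t-1})\le \Phi'(\hat V_t)\,\hat g_t^+(x_t)$, feasibility of the comparator removes its penalty term, and telescoping gives the claim. Your treatment of the boundary term at $t=m$ and your remark that no correction term is needed (because $\Phi'$ is evaluated at the updated $\hat V_t$) are both right.
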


\begin{algorithm}[t]
	\caption{\small{Learning for \textsc{$\texttt{COCO-M}$} and \textsc{$\texttt{COCO-M}^2$}}}
	\label{algo:memory-coco}
\begin{small}    
	\begin{algorithmic}[1]
		\REQUIRE initial history $x_{0}^{m-1}\in X^{m}$; dual seed $\hat{\mathrm{V}}_{0}^{m-1}\gets\mathbf 0$
		\FOR{$t=m$ \TO $T-1$}
		\STATE Play $x_t$ and observe $f_t(\cdot),\,g_t(\cdot)$
		\STATE Calculate $f_t(x_{t-m}^t)$ and $g_t(x_t)$. \qquad {\color{blue}{\textit{\small{// for \ $\texttt{COCO-M}$}}}}
		\STATE Calculate $f_t(x_{t-m}^t)$, $g_t(x_{t-m}^t)$. \quad {\color{blue}{\textit{\small{// for \ $\texttt{COCO-M}^2$}}}}
		 \STATE ${\mathrm{V}}_{t}\gets{\mathrm{V}}_{t-1}+g_t^{+}(x_{t})$\quad {\color{blue}{\textit{\small{// dual update for $\texttt{COCO-M}$}}}}
		\STATE $\hat{\mathrm{V}}_{t}\gets\hat{\mathrm{V}}_{t-1}+g_t^{+}(x_{t},\dots,x_t)$\quad {\color{blue}{\textit{\small{// for $\texttt{COCO-M}^2$}}}}
        \STATE Compute surrogate gradient $\nabla \hat{\c L}_t(x_t)$ via~\eqref{eq:surrogate}
        \STATE $x_{t+1}\gets$ solution of~\eqref{eq:COCO-update} \qquad {\color{blue}{\textit{\small{// devise next decision}}}}
		\ENDFOR
	\end{algorithmic}
\end{small}    
\end{algorithm}

And, secondly, we transfer these results to the original problem with memory, by observing that we can write:
\begin{align}
	\c{R}_{T}^{c}
	&= \underbrace{\sum_{t=m}^{T}
		f_t(x_{t-m}^t)-f_t(x_t,\dots,x_t)}
	_{\text{memory deviation}}+
	\hat{\c{R}}_{T}^c\label{eq:objective_memory_decom}\\
   \mathcal{V}_T^{mc} &= \sum_{t=m}^{T} g_t^+(x_{t-m}^t) - g_t^+(x_t,\dots,x_t) + \hat{\mathcal{V}}_T^{mc}. \label{eq:constraint_memory_decom}
\end{align}
We bound this memory deviation by exploiting the functions Lipschitz continuity together with the one-step OGD bound. The following theorem summarizes the achieved bounds.

\begin{theorem}[Regret and CCV for $\texttt{COCO-M}^2$]
	\label{thm:full-memory}
	Assume \textnormal{(i)}~Assumptions
	\ref{ass:domain}, \ref{ass:convex}, \ref{ass:lipschitz} hold;
	\textnormal{(ii)} the constraint is
	$g_t(x_{t-m}^{t})$,  
	and \textnormal{(iii)} we use the step
	\(
	\eta_t\!=\! \frac{\sqrt{2}\|\mathcal{X}\|}{2\sqrt{\sum_{\tau=1}^{t} \|\nabla \hat{\mathcal{L}}_t(x_t)\|^2}}
	\)
	and the penalty 
	\(
	\Phi(V)\!=\!\lambda V^2
	\)
	with
	\(
	\lambda\!=\!\tfrac{1}{\sqrt{T}}.
	\)
Then, $\forall \ T\!\ge\! m$ it is:
	\begin{align}
		\c{R}_{T}^{mc}
		&=\mathcal{O}\Big(m^{\frac{3}{2}} \sqrt{T\log(T)}\Big)
		\\
		\c{V}_T^{mc}
		&=\mathcal{O}\left(\max\{T^{3/4},m^{\frac{3}{2}}\sqrt{T\log (T)}\right).
	\end{align}
\end{theorem}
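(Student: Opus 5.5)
Our plan is to combine Lemma~\ref{lem:OGD_adaptive_bound} with Lemma~\ref{lem:Regret_decom_no_predic} to control the memory-less quantities $\hat{\c R}_T^c$ and $\hat{\c V}_T^{mc}$. We then lift these bounds to the memory setting through the decompositions \eqref{eq:objective_memory_decom}--\eqref{eq:constraint_memory_decom}.

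\textbf{Step 1: memory-less bound.} By \eqref{eq:grad_bound}, $\|\nabla\hat{\c L}_t(x_t)\|\le L_f+2\lambda \hat V_T L_g$. Lemma~\ref{lem:OGD_adaptive_bound} then gives $\c R_T(\hat{\c L})\le \sqrt2\|\c X\|\sqrt T\,(L_f+2\lambda\hat V_T L_g)$. Plugging this into Lemma~\ref{lem:Regret_decom_no_predic} with $\Phi(V)=\lambda V^2$ and $\lambda=1/\sqrt T$ (and $\hat V_m$ bounded by $mG$) yields
\[
\hat{\c R}_T^c+\tfrac{1}{\sqrt T}\hat V_T^2\le c_1\sqrt T+c_2\hat V_T + \tfrac{m^2G^2}{\sqrt T}.
\]
Since $\hat{\c R}_T^c\ge -2FT$ trivially, the inequality is quadratic in $\hat V_T$. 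Solving it gives $\hat V_T^2\lesssim \sqrt T(\sqrt T+T)$, i.e.\ $\hat V_T=\c O(T^{3/4})$. Dropping the nonnegative term $\lambda\hat V_T^2$ and completing the square in $c_2\hat V_T-\hat V_T^2/\sqrt T\le c_2^2\sqrt T/4$ gives $\hat{\c R}_T^c=\c O(\sqrt T)$.

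\textbf{Step 2: memory deviation.} By Lipschitz continuity of $f_t$ and $g_t^+$,
\[
|f_t(x_{t-m}^t)-f_t(x_t,\dots,x_t)|\le L_f\Big(\sum_{i=1}^m\|x_{t-i}-x_t\|^2\Big)^{1/2}\le L_f\sqrt m\sum_{i=1}^m\|x_{t-i}-x_t\|,
\]
and each difference satisfies $\|x_{t-i}-x_t\|\le\sum_{j=1}^{i}\eta_{t-j}\|\nabla\hat{\c L}_{t-j}\|$ by nonexpansiveness of the projection. Summing over $t$ and $i\le m$ yields a total of order $L\,m^{3/2}\sum_t\eta_t\|\nabla\hat{\c L}_t(x_t)\|$. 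With the adaptive step, the standard estimate $\sum_t a_t/\sqrt{\sum_{\tau\le t}a_\tau}\le 2\sqrt{\sum_t a_t}$ does not suffice here: the quantity is $\sum_t\|g_t\|/\sqrt{\sum\|g_\tau\|^2}$, not $\sum_t\|g_t\|^2/\sqrt{\cdot}$. We would therefore bound
\[
\|\nabla\hat{\c L}_t\|\,\eta_t\le \|\c X\|\,\frac{\|\nabla\hat{\c L}_t\|}{\sqrt{\sum_{\tau\le t}\|\nabla\hat{\c L}_\tau\|^2}}
\]
and use Cauchy--Schwarz together with $\sum_t a_t^2/\sum_{\tau\le t}a_\tau^2\le 1+\log\!\big(\sum_t a_t^2/a_1^2\big)$. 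This gives $\sum_t\eta_t\|\nabla\hat{\c L}_t\|\le \|\c X\|\sqrt{T(1+\log(\cdot))}$. The argument of the log is polynomial in $T$, because the gradients are bounded by $L_f+2\lambda\hat V_TL_g=\c O(T^{1/4})$ and we assume a nondegenerate first gradient. The log term is therefore $\c O(\log T)$, and the memory deviation is $\c O(m^{3/2}\sqrt{T\log T})$.

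\textbf{Step 3: combine.} Inserting Steps 1--2 into \eqref{eq:objective_memory_decom} gives $\c R_T^{mc}=\c O(\sqrt T+m^{3/2}\sqrt{T\log T})$. Inserting them into \eqref{eq:constraint_memory_decom} gives $\c V_T^{mc}=\c O(T^{3/4}+m^{3/2}\sqrt{T\log T})$, i.e.\ the max of the two terms. The main obstacle is Step~2: the log-sum bound needs care with the adaptive, data-dependent step size, since the gradient norms grow with $\hat V_t$, and with the possibility of an initially tiny gradient. If this fails, we would fall back on the crude per-step bound $\|x_{t+1}-x_t\|\le\eta_t\|\nabla\hat{\c L}_t\|\le\|\c X\|/\sqrt2$, and split the rounds according to whether the cumulative squared gradient has exceeded a constant threshold.
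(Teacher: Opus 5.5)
Your route is the paper's. Step~1 reproduces its memory-less analysis: the adaptive OGD bound, the quadratic-penalty decomposition, $\hat{\mathcal R}_T^c\ge -2FT$, and then solving the quadratic in $\hat V_T$. Step~2 uses the same Lipschitz, one-step OGD, Cauchy--Schwarz and log-sum ingredients. The paper applies Cauchy--Schwarz inside each round and Jensen across rounds, whereas you first reduce to $\sum_s d_s$ with $d_s=\eta_s\|\nabla\hat{\mathcal L}_s(x_s)\|$ and apply Cauchy--Schwarz once; the two are equivalent.

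However, your counting in Step~2 does not give $m^{3/2}$. You start from $L_f\sqrt m\sum_{i=1}^m\|x_{t-i}-x_t\|$ and use $\|x_{t-i}-x_t\|\le\sum_{j=1}^{i}d_{t-j}$. Summing over $t$ and $i$ then counts each $d_s$ exactly $\sum_{j=1}^m(m-j+1)=m(m+1)/2$ times, so you get order $m^{5/2}\sum_s d_s$. The fix is to bound every $\|x_{t-i}-x_t\|$ by the full window sum $D_t=\sum_{j=1}^m d_{t-j}$ before taking the Euclidean norm. The per-round deviation is then at most $L_f\sqrt m\,D_t$, and since each $d_s$ lies in at most $m$ windows, summing over $t$ gives $m^{3/2}\sum_s d_s$.

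Second, you cannot simply assume a ``nondegenerate first gradient,'' and your threshold-splitting fallback does not rescue it. With $\eta_t$ exactly as stated, take $m=1$, $\mathcal X=[-1,1]$, $g_t\equiv-1$ and $f_t(x_{t-1},x_t)=|x_t-x_{t-1}|+c_tx_t$ with $c_t=(-1)^t2^{t-T}$. These losses are convex, bounded, uniformly Lipschitz and fixed in advance. Then $\nabla\hat{\mathcal L}_t=c_t$, and the cumulative squared gradient stays below $4/3$ for all $t$. Yet $c_t^2/\sum_{\tau\le t}c_\tau^2>3/4$, so $\eta_t|c_t|>\sqrt{3/2}$ in every round. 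From the second round on, the projected iterates alternate between $[-1,-0.22]$ and $[0.22,1]$ and move by at least $\sqrt{3/2}$ per round. The memory deviation $\sum_t|x_t-x_{t-1}|$, and hence $\mathcal R_T^{mc}$, therefore grows linearly, while every round sits below any constant threshold.

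The paper's proof handles this with its ``w.l.o.g.'' normalization $\delta_0=1$, where $\delta_t=\delta_0+\sum_{k\le t}\|\nabla\hat{\mathcal L}_k(x_k)\|^2$. This really adds a constant inside the square root of $\eta_t$, i.e., it modifies the algorithm. You should adopt the same modification. With it, $\sum_t(\delta_t-\delta_{t-1})/\delta_t\le\log\delta_T\le\log\bigl(1+T(L_f+2\lambda\hat V_TL_g)^2\bigr)=\mathcal O(\log T)$ with no condition on the data. Lemma~\ref{lem:OGD_adaptive_bound} changes only by an additive constant, and the rest of your argument goes through.
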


\paragraph{Discussion.}
In Theorem~\ref{thm:full-memory} we see the effect of $m$: longer memory amplifies both the regret and CCV, thus the
performance degrades as the window grows. The only directly comparable study is \cite{liu2023online}, which assumes $m\!=\!\log T$ and achieves strictly inferior bounds (Table \ref{tab:related-work}) for any $T\!\in\![3,10^{49}]$.
When memory vanishes (\(m\!=\!0\)), our bounds collapse to
\(\mathcal{O}(\sqrt{T})\) for the regret and
 \(\mathcal{O}(T^{3/4})\) for the CCV; which are looser than the bounds of \cite{sinha2024optimal}. This gap arises because we employ a \emph{quadratic} penalty that yields sharper results for the memory problem, whereas ~\cite{sinha2024optimal} use an exponential penalty in the memory–free setting. We will show later that, for $\texttt{COCO-M}$ and sufficiently short memory, this gap vanishes as we also use there a different penalty function.

\subsection{Problem  $\texttt{COCO-M}$} 
We continue with the case where memory affects only the losses, while the constraints are time-varying memory-less functions. The goal is to ensure sublinear regret $\c R_T^{mc}$ and constraint violation $\c V_T^c$, see \eqref{coco-m}. The algorithm is identical to the one above, with only changes in the calculation of the constraint violation and the dual update. In particular, following the same steps, we first invoke the regret–decomposition lemma and translate the bound back to the original problem using solely \eqref{eq:objective_memory_decom}, because, here, the term \(\hat{V}_T\) appearing in the lemma coincides with the cumulative violation \(\mathcal{V}_T^{c}\). We thus get the next result.

\begin{theorem}[Regret and CCV with memory-free constraint]
	\label{thm:obj-memory}
	Given that \textnormal{(i)}~Assumptions
	\ref{ass:domain}, \ref{ass:convex}, and \ref{ass:lipschitz} hold,
	\textnormal{(ii)} the constraint is memory-less
	$(g_t=(x_t))$,  
	and \textnormal{(iii)} we use the adaptive step
	\(
	\eta_t= \frac{\sqrt{2}\|\mathcal{X}\|}{2\sqrt{\sum_{\tau=1}^{t} \|\nabla \hat{\mathcal{L}}_t(x_t)\|^2}}
	\)
	and the penalty
	\(
	\Phi(V)=\lambda V^2
	\)
	with
	\(
	\lambda=\tfrac{1}{\sqrt{T}}.
	\)
Then, for any $T\!\ge\! m$ we have:
	\begin{align}
		\c{R}_{T}^{mc}=\mathcal{O}\Big(m^{\frac{3}{2}} \sqrt{T\log(T)}\Big),
		\ \ 
		\c{V}_T^{c}=\mathcal{O}\left(T^{3/4}\right).
	\end{align}
\end{theorem}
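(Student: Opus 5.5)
We follow the same two-step route as for Theorem~\ref{thm:full-memory}, with one simplification. In \texttt{COCO-M} the tracked quantity $\hat V_t$ is exactly the true violation $\mathcal V^{c}_t$. Hence the violation bound needs no memory-deviation correction, and only \eqref{eq:objective_memory_decom} is needed for the regret. We therefore start from Lemma~\ref{lem:Regret_decom_no_predic} combined with Lemma~\ref{lem:OGD_adaptive_bound}, with benchmark $x^\star\in\mathcal X_T$:
\begin{equation*}
\hat{\mathcal R}^c_T+\lambda (\mathcal V^c_T)^2-\lambda (\mathcal V^c_m)^2\le \sqrt2\|\mathcal X\|\sqrt{\textstyle\sum_t\|\nabla\hat{\mathcal L}_t(x_t)\|^2}\le \sqrt2\|\mathcal X\|\sqrt T\,(L_f+2\lambda \mathcal V^c_T L_g),
\end{equation*}
where the second step uses \eqref{eq:grad_bound} and the monotonicity of $\mathcal V^c_t$. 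The term $\mathcal V^c_m\le mG$ is a constant.

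\textbf{Violation bound.} We lower-bound the regret by $\hat{\mathcal R}^c_T\ge -2FT$, which holds since the losses are $F$-bounded. Setting $\lambda=1/\sqrt T$ gives a quadratic inequality in $V:=\mathcal V^c_T$:
\begin{equation*}
V^2/\sqrt T \le 2FT + \lambda m^2G^2 + \sqrt2\|\mathcal X\|(L_f\sqrt T+2L_gV).
\end{equation*}
Solving it yields $V\le \mathcal O(\sqrt{\sqrt T\cdot T}) + \mathcal O(\sqrt T)=\mathcal O(T^{3/4})$. The linear term in $V$ contributes only $\mathcal O(\sqrt T)$.

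\textbf{Regret bound.} Since $\lambda V^2\ge0$, dropping this term gives
\begin{equation*}
\hat{\mathcal R}^c_T\le \mathcal O(\sqrt T)+ 2\sqrt2\|\mathcal X\|L_g V+\mathcal O(1).
\end{equation*}
The middle term is $\mathcal O(\sqrt T)\cdot\lambda V=\mathcal O(V)$, which is too crude. To do better, we use the negative quadratic instead: $2\sqrt2\|\mathcal X\|L_gV-\lambda V^2\le \frac{2\|\mathcal X\|^2L_g^2}{\lambda}=\mathcal O(\sqrt T)$, by completing the square. This gives $\hat{\mathcal R}^c_T=\mathcal O(\sqrt T)$.

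\textbf{Memory deviation.} We then bound the memory deviation in \eqref{eq:objective_memory_decom} by Lipschitzness:
\begin{equation*}
\sum_t |f_t(x_{t-m}^t)-f_t(x_t,\dots,x_t)|\le L_f\sum_t\Big(\sum_{i=1}^m\|x_t-x_{t-i}\|^2\Big)^{1/2}\le L_f\sqrt m\sum_t\sum_{i=1}^m\sum_{j=t-i}^{t-1}\|x_{j+1}-x_j\|.
\end{equation*}
This is at most $L_f m^{3/2}\sum_t\|x_{t+1}-x_t\|$, and each step satisfies $\|x_{t+1}-x_t\|\le \eta_t\|\nabla\hat{\mathcal L}_t(x_t)\|$ by nonexpansiveness of the projection.

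\textbf{Main obstacle.} The hard step is controlling $\sum_t\eta_t\|\nabla\hat{\mathcal L}_t\|$. With the adaptive step, this sum is $\frac{\|\mathcal X\|}{\sqrt2}\sum_t \frac{a_t}{\sqrt{\sum_{\tau\le t}a_\tau^2}}$ where $a_t=\|\nabla\hat{\mathcal L}_t\|$. Unlike the squared version, this is not automatically $\mathcal O(\sqrt{\cdot})$: it can reach $\sqrt T$ times a logarithmic factor of the gradient ratios. I would first try the bound
\begin{equation*}
\sum_t \frac{a_t}{\sqrt{A_t}}\le \sqrt T\sqrt{\textstyle\sum_t a_t^2/A_t}\le \sqrt T\sqrt{1+\log(A_T/a_1^2)},
\end{equation*}
where $A_t=\sum_{\tau\le t}a_\tau^2$. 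Since $a_t\le L_f+2\lambda V_T L_g=\mathcal O(T^{1/4})$ is polynomial in $T$, and assuming $a_1$ is bounded away from zero (otherwise we start the sum at the first nonzero gradient, which only helps), the logarithm is $\mathcal O(\log T)$.

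This yields a memory deviation of $\mathcal O(m^{3/2}\sqrt{T\log T})$. Adding $\hat{\mathcal R}^c_T=\mathcal O(\sqrt T)$ gives the claimed $\mathcal R^{mc}_T$, while $\mathcal V^c_T=\mathcal O(T^{3/4})$ needs no memory correction, completing the proof.
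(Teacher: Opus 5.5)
Your route is the paper's: Lemma~\ref{lem:OGD_adaptive_bound} plus Lemma~\ref{lem:Regret_decom_no_predic} together with the gradient bound. Completing the square with $\lambda=1/\sqrt T$ gives $\hat{\mathcal R}^c_T=\mathcal O(\sqrt T)$. The lower bound $-2FT$ and a quadratic inequality give $\mathcal V^c_T=\mathcal O(T^{3/4})$. Finally, a Lipschitz lift of the memory deviation uses the one-step OGD movement and the telescoping bound $\sum_t a_t^2/A_t\lesssim\log$. The paper applies Cauchy--Schwarz inside each window and then Jensen over $t$. You instead bound the deviation by the total path length and apply one global Cauchy--Schwarz. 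Both land on the same $m^{3/2}\sqrt{T}\sqrt{\sum_t a_t^2/A_t}$.

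One step fails as written. From $L_f\sqrt m\sum_t\sum_{i=1}^m\sum_{j=t-i}^{t-1}\|x_{j+1}-x_j\|$ you cannot conclude $L_f m^{3/2}\sum_t\|x_{t+1}-x_t\|$. Each increment $\|x_{j+1}-x_j\|$ is counted $\sum_{i=1}^m i=m(m+1)/2$ times in that triple sum, so your chain only yields a $\Theta(m^{5/2})$ factor. The fix is to use $\bigl(\sum_{i=1}^m\|x_t-x_{t-i}\|^2\bigr)^{1/2}\le\sqrt m\max_{i\le m}\|x_t-x_{t-i}\|\le\sqrt m\sum_{j=t-m}^{t-1}\|x_{j+1}-x_j\|$. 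Then each increment appears for exactly $m$ values of $t$, and the $m^{3/2}$ factor is correct.

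A smaller point concerns the log term. Your parenthetical, starting the sum at the first nonzero gradient, does not cover a tiny but nonzero first gradient $a_{t_0}$. In that case $\log(A_T/a_{t_0}^2)$ is unbounded, and the path-length bound can degrade. The paper sidesteps this with the convention $\delta_0=1$, which amounts to a constant offset in the step-size accumulator. This makes the telescoped sum at most $\log(1+A_T)=\mathcal O(\log T)$ with no lower bound on the gradients. Adopt the same offset rather than assuming $a_1$ is bounded away from zero.
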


\paragraph{Discussion} 
Relatively to $\texttt{COCO-M}^2$ (Theorem~\ref{thm:full-memory}), two points stand out:
\textnormal{(i)} The CCV no longer depends on the window \(m\); it remains
      \(\mathcal{O}(T^{3/4})\) regardless of the loss functions' memory. \textnormal{(ii)} The regret retains the \(m^{3/2}\sqrt{T\log T}\) factor. Compared to the regret bound \(\mathcal{O}\!\bigl(m^{3/2}\sqrt{T}\bigr)\) of \cite{anava2015online},
this carries an extra \(\sqrt{\log T}\) factor; this is the price of enforcing time–varying constraints. 

Furthermore, it is important to stress that when the memory length satisfies
\(m\!\le\!\bigl(T^{1/6}/\log T\bigr)^{1/3}\), thus
\(m^{3/2}\sqrt{T\log T}\!\le\! T^{3/4}\), we can replace the quadratic penalty with an exponential one with a tuned~\(\lambda\), to achieve tighter guarantees for CCV: $\mathcal{R}_T^{mc}
  =\mathcal{O}\!\bigl(\sqrt{T}+m^{3/2}\sqrt{T\log T}\bigr)$ and $\mathcal{V}_T^c
  =\mathcal{O}\!\bigl(\sqrt{T\log T}+ m^{3/2}\sqrt{T}\log T\bigr).$
While $\mathcal{V}_T^c$ depends now on $m$, this bound is smaller than $\mathcal{O}(T^{3/4})$. In several practical applications indeed $m$ is a constant and much smaller than any expression of the growing $T$. Hence, for all these problems we can enable these improved rates. Finally, we note that for \(m\!=\!0\) the bounds reduce to those in \cite{sinha2024optimal}. We provide the details for these cases in the Appendix.

\section{Benefiting from Predictions}\label{sec:COCO with memory and prediction}
 
We next study \texttt{COCO-M}$^2$ when predictions about forthcoming loss and constraint functions are available see Figure \ref{fig:model} for the decision stages. We study problem \texttt{COCO-M} in the Appendix. This form of learning, known as \emph{Optimistic Learning} (OL), achieves bounds that shrink with the initially-unknown predictions' accuracy. Predictions are widely studied in OCO \cite{rakhlinNeurips13}, less so in COCO \cite{lekeufack2024optimistic}, and only recently in OCO-M \cite{mhaisen2024optimistic} -- still, without constraints. In classical OCO, it suffices to predict the next gradient; alas in the presence of memory the forecasting should involve the gradients of the next $m$ slots.  

This compounded problem requires modifying the approach in Sec. \ref{sec:COCO with memory without predic}. First, we use the \emph{memory-based} surrogate:
\begin{equation}
\!\mathcal L_t\bigl(x_{t-m}^{\,t}\bigr)
\;=\;
f_t\bigl(x_{t-m}^{\,t}\bigr)
\;+\;
\Phi'\!\bigl(V_{t-m-1}\bigr)\,g_t^{+}\bigl(x_{t-m}^{\,t}\bigr), \label{eq:memory-surrogate}
\end{equation}
with \(\Phi(V)\!=\!\exp(\lambda V)\!-\!1\) that has delayed argument $V_{t-m-1}$. We study the \texttt{COCO-M}$^2$ problem, where:
\[
V_t = V_{t-1}+g_t^{+}\!\bigl(x_{t-m}^{\,t}\bigr),
\quad \text{i.e., }
V_T =\mathcal V_T^{mc}.
\]
Secondly, instead of performing OL on \eqref{eq:memory-surrogate}, we turn the problem on its head and interpret the losses and constraints as having {delayed gradients} instead of depending on past decisions: at round $t$ the learner selects $x_t$ but is only able to observe the gradient of this decision at the end of $t+m$, i.e., after all functions influenced by $x_t$ are revealed. This change of vantage point allow us to replace the memory effect with a delay effect, which then is handled through a particular version of OL. Before we proceed, we need the following.
\begin{assumption}[Separability]\label{ass:separable}
Every memory-based function can be decomposed into components, each
depending on a decision from a different round:
\[
f_t\bigl(x_{t-m}^{t}\bigr)=\sum_{i=0}^{m} f_t^{i}(x_{t-i}),
\qquad
g_t\bigl(x_{t-m}^{t}\bigr)=\sum_{i=0}^{m} g_t^{i}(x_{t-i}).
\]
\end{assumption}
\noindent where $f^i_t$, $g_t^i$ are defined on $\mathcal{X}$, and $i$ marks that their argument was decided in round $t-i$.
\begin{assumption}[Linearity]\label{ass:linearity}
Every function \(f_t^{i}(\cdot)\) and \(g_t^{i}(\cdot)\) is linear, for
all \(t\in \mathcal{T}\), \(i\in [0,m]\).
\end{assumption}


\begin{figure}[t]
\begin{center}
\includegraphics[width=0.65\columnwidth]{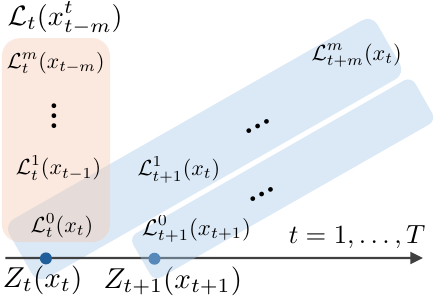}
\end{center}
\vspace{-1mm}
\caption{\textbf{Diagonal}: $Z_t(x_t)$ depends only on $x_t$, but includes loss/constraint components from next $m$ rounds. \textbf{Vertical}: $\mathcal L_t(x_{t-m}^t)$ includes function components only from round $t$, but depends on all past $m$ decisions.} 
\label{fig:forward-function}
\end{figure}

Now, the key idea for pivoting to delayed learning is introducing a \emph{forward} loss function to capture the influence of each $x_t$ on the system operation, i.e., $\ Z_t(x_t)\!\doteq \!$
\[
  \!\sum_{i=0}^{m}\mathcal L_{t+i}^i(x_{t})
  \!\triangleq\!
  \sum_{i=0}^{m} f_{t+i}^{\,i}(x_t)
  \!+\!
  \Phi'(V_{t-m-1+i})\,g_{t+i}^{i,+}(x_t).
\]
Despite the similarities with $\mathcal L_t\bigl(x_{t-m}^{\,t}\bigr)$, $Z_t$ depends only on round $t$ decision (is memory-less) and includes function components from the next $m$ rounds (has delayed gradient), see Figure \ref{fig:forward-function}). Our strategy is to bound the regret of $Z_t$ and translate that bound to the initial problem. A similar construction was used for OCO in \cite{mhaisen2024optimistic}.

The next new decomposition lemma ties the regret and CCV to memory-based surrogate loss and forward function.
\begin{lemma}\label{lem:memory-prediction-forward-decomposition}
For any valid penalty function $\Phi$ and under Assumption\footnote{We trivially set $f_t(\cdot)=g_t(\cdot)=0$ for $t\le m$ or $t>T$.} \ref{ass:separable} it holds:
\begin{align}
&\Phi\!\bigl(V_{T}\bigr)-\Phi\!\bigl(V_{m-1}\bigr)
      +\mathrm{R}_T^{mc}\notag \\
  &\;\le\;
    \mathcal{R}_{T}(\mathcal{L})
    +G(m+1)\,\Phi'\!\bigl(V_{T}\bigr)
    \label{eq:lemma-main-1}\\
  &\;\le\;
    \mathcal{R}_{T}(Z)
    +G(m+1)\,\Phi'\!\bigl(V_{T}\bigr).
    \label{eq:lemma-main-2}
\end{align}
where the regret is defined for benchmarks in the set:
\[
  \mathcal X_T^{\mathrm{mp}}
  \;=\;
  \bigl\{\,x\in \mathcal X : g_t^{\,i}(x)\le0,\
          \forall t\in \c T,\ i\leq m\bigr\}.
\]
\end{lemma}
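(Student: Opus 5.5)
The first inequality \eqref{eq:lemma-main-1} will follow the template of Lemma~\ref{lem:Regret_decom_no_predic}, adapted to the delayed penalty argument $V_{t-m-1}$ and applied directly to the memory-based surrogate \eqref{eq:memory-surrogate}. The second inequality \eqref{eq:lemma-main-2} is a pure re-indexing (diagonal versus vertical summation, Fig.~\ref{fig:forward-function}) that uses Assumption~\ref{ass:separable}.

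\textbf{Step 1 (first inequality).} Fix $x^\star\in\mathcal X_T^{\mathrm{mp}}$. Since every component satisfies $g_t^i(x^\star)\le 0$, we have $g_t(x^\star,\dots,x^\star)\le 0$, and hence $g_t^+(x^\star,\dots,x^\star)=0$. Therefore $\mathcal L_t(x^\star,\dots,x^\star)=f_t(x^\star,\dots,x^\star)$, and
\[
\mathcal R_T(\mathcal L)=\mathrm R_T^{mc}+\sum_{t}\Phi'(V_{t-m-1})\,g_t^+(x_{t-m}^t).
\]
Next, convexity of $\Phi$ gives $\Phi(V_t)-\Phi(V_{t-1})\le \Phi'(V_t)\,g_t^+(x_{t-m}^t)$. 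I then write
\[
\Phi'(V_t)=\Phi'(V_{t-m-1})+\bigl(\Phi'(V_t)-\Phi'(V_{t-m-1})\bigr).
\]
Summing over $t$ and telescoping yields
\[
\Phi(V_T)-\Phi(V_{m-1})\le \sum_t \Phi'(V_{t-m-1})\,g_t^+ + \sum_t\bigl(\Phi'(V_t)-\Phi'(V_{t-m-1})\bigr)\,g_t^+ .
\]
For the error term I use $g_t^+\le G$ and the monotonicity of $\Phi'$, so that
\[
\sum_t \bigl(\Phi'(V_t)-\Phi'(V_{t-m-1})\bigr)\le \sum_{j=0}^{m}\sum_t \bigl(\Phi'(V_{t-j})-\Phi'(V_{t-j-1})\bigr).
\]
Each of the $m+1$ inner sums telescopes to at most $\Phi'(V_T)$, so the total is at most $G(m+1)\Phi'(V_T)$. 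Adding this to the identity for $\mathcal R_T(\mathcal L)$ gives \eqref{eq:lemma-main-1}.

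\textbf{Step 2 (second inequality).} By separability, and with the convention that $f_t=g_t=0$ outside the horizon, reordering the double sum gives
\[
\sum_t \mathcal L_t(x_{t-m}^t)=\sum_t\sum_{i=0}^m \mathcal L^i_t(x_{t-i})=\sum_s Z_s(x_s).
\]
One subtlety is how the positive part interacts with separability. In $Z_t$ the paper uses $g^{i,+}_{t+i}$, and since $(\sum_i a_i)^+\le\sum_i a_i^+$, the vertical surrogate is bounded above by its componentwise-positive-part version. So I will define $\mathcal L_t$ through the components, i.e.\ with $\sum_i g_t^{i,+}$ as an upper bound. On the learner's side this only increases the sum. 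On the benchmark side nothing changes, because every $g_t^{i}(x^\star)\le 0$ makes each positive part vanish, and both expressions reduce to $\sum_t f_t(x^\star,\dots,x^\star)=\sum_s\sum_i f^i_{s+i}(x^\star)$. Hence $\mathcal R_T(\mathcal L)\le\mathcal R_T(Z)$.

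\textbf{Main obstacle.} The hard part is the consistency check in Step 2. The penalty weights must match: the component $g_{t}^{i}$ carries the weight $\Phi'(V_{t-m-1})$ in the vertical form, and $\Phi'(V_{(t-i)-m-1+i})$ in $Z_{t-i}$, which is the same quantity. The positive-part split must also be handled so that the inequality goes in the right direction. Step 1 additionally requires applying the delay telescoping carefully at the boundary indices $t\le m$, using $V_{m-1}$ as the base.
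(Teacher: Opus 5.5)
Your proposal is correct and follows the paper's proof essentially step for step. The first inequality comes from convexity of $\Phi$ plus the split $\Phi'(V_t)=\Phi'(V_{t-m-1})+\bigl(\Phi'(V_t)-\Phi'(V_{t-m-1})\bigr)$, with the correction term bounded by $G(m+1)\Phi'(V_T)$ through monotonicity and telescoping; the second comes from the diagonal re-indexing under separability, the inequality $\bigl(\sum_i a_i\bigr)^+\le\sum_i a_i^+$, and the vanishing of the benchmark penalty terms on $\mathcal X_T^{\mathrm{mp}}$ (including your checks that $\mathcal X_T^{\mathrm{mp}}\subseteq\mathcal X_T^m$ and that the delayed weights match, which the paper uses implicitly).
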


\noindent Equipped with this result, it suffices to bound $\mathcal{R}_T(Z)$, which we achieve by utilizing predictions to cope with its delayed gradients. In particular, we leverage \cite{flaspohler2021online} that proposed a suite of delayed–optimistic algorithms (without memory or constraints), and adapt their \emph{Optimistic Delayed AdaF} (\texttt{ODAF}) algorithm, which has the tightest guarantees, for our problem. \texttt{ODAF} relies on FTRL \cite{mcmahan2011follow} which decides the next decision using all the previous gradients. In the standard delayed-feedback OCO, the entire gradient $\nabla f_{t}(x_t)$ is revealed in one shot, exactly $m$ rounds after $x_{t}$ is decided. With a memory window of length~$m$ the situation is subtler: each decision~$x_t$ affects \emph{some of the components} of the loss functions in rounds $t,\dots,t+m$; hence, the gradient information arrives piecemeal and may be delayed up to~$m$ rounds before fully revealed. Specifically, at round~\(t\) the learner possesses:
\begin{itemize}
    \item[\(\checkmark\)] \emph{revealed} gradients
          \(\nabla Z_{\tau}(x_\tau)\), for $\tau=1,\dots,t-m-1$ (coming from \(x_1,\dots,x_{t-m-1}\));
    \item[\(\circ\)] \emph{delayed} gradients that still depend on
          \(x_{t-m},\dots,x_{t-1}\);
    \item[\(\times\)] \emph{unseen} gradients that will depend on decision \(x_t\).
\end{itemize}

With this in mind,  we perform OL using an oracle that provides the next forward function gradient:
\[
\nabla\tilde Z_t(\tilde{x}_t)
=\sum_{i=0}^{m}
      \Bigl[
          \nabla\tilde f_{t+i}^{\,i}(\tilde{x}_t)
        + \Phi'\!\bigl(V_{t-m-1+i}\bigr)\,
          \nabla\tilde g_{t+i}^{i,+}(\tilde{x}_t)
      \Bigr],
\]
as well as the missing past gradients, which we combine into a single \emph{hint} vector: \(h_t\doteq\)
\begin{align}
&
\underbrace{%
 \! \sum_{i=0}^{m-1}\!\Bigg(\!\!\sum_{j=0}^{m-i-1}
      \!\Bigl[
          \nabla f_{t-m+i+j}^{\,j}
        \!+\! \Phi'\!\bigl(V_{t+i+j-2m-1}\bigr)\!
          \nabla g_{t-m+i+j}^{j,+}
      \!\Bigr]}_{\textcolor{blue}{\text{available at $t$}}}\!+\!
\nonumber\\
&
\underbrace{%
 \!\! \sum_{j=m-i}^{m}\!\!
      \Bigl[\!
          \nabla\tilde f_{t-m+i+j}^{\,j}
        \!+\! \Phi'\bigl(V_{t+i+j-2m-1}\!\bigr)
          \nabla\tilde g_{t-m+i+j}^{j,+}
     \! \Bigr]\!\!\Bigg)
  \!+\! \nabla\tilde Z_t}_{\textcolor{red}{\text{future predictions}}} \notag
\end{align}
where we denote $\nabla\! Z_t(x_t)$ with $\nabla\!Z_t $, and $\nabla\!\tilde Z_t(\tilde{x}_t)$ with $\nabla\!\tilde Z_t$.

Similarly to other OL algorithms \cite{rakhlin2013optimization}, \texttt{ODAF} performs an update (here, using FTRL) whose regularization is scaled by the prediction error. After the decision is committed, the losses $f_t$ and $g_t$ are revealed, rendering available the forward function $Z_{t-m}(x_{t-m})$. At the end of $t$, the learner therefore knows the gradients $\nabla Z_\tau$ for all $\tau=1,\dots,t-m$ and can evaluate the error of the hint $h_{t-m}$, which covered the window $\tau=(t-2m)({:}t-m)$,
\begin{equation}\label{eq:prediction-error}
\epsilon_{t-m}(Z)=\left\|\sum_{\tau=t-2m}^{t-m}\nabla Z_{\tau}-h_{t-m}\right\|^{2},
\end{equation}
with $\mathcal E_T(Z)=\sum_{t=m}^{T}\epsilon_t(Z)$ denoting the cumulative prediction error. The loss-function prediction error is then:
\begin{align*}
    \epsilon_{t-m}(f)&=
\left\| \sum_{s=t-2m}^{t-m} \sum_{i=t-s}^{m} \left( \nabla \tilde{f}_{s+i}^{i}(\tilde{x}_s) - \nabla f_{s+i}^{i}(x_s) \right) \right\|^2,
\end{align*}
where the outer sum runs over the last \(m\) decisions \(x_s, s=t-m,\ldots,m\), whose delayed contributions have not been fully revealed at $t-m$, and the inner sum selects only those slices that arrive \emph{after} \(t-m\) (i.e., delays \(i\ge t-s\)), and measures the difference between their predicted and true gradients. Similarly, we can define the constraint-function error as $\epsilon_{t-m}(g^+)$, and we denote $\mathcal{E}_T(f)=\sum_{t=m}^{T}\epsilon_{t}(f)$ and $\mathcal{E}_T(g^+)=\sum_{t=m}^{T}\epsilon_{t}(g^+)$ the cumulative errors.


\begin{algorithm}[t]	\caption{{\small{Optimistic learning for \textsc{$\texttt{COCO-M}^2$}}}}
	\label{algo:memory-coco_predic}
\begin{small}    
	\begin{algorithmic}[1]
		\REQUIRE initial history $x_{0}^{m}\in \mathcal{X}^{m+1}$; dual seed $\hat{V}_{0}^{m-1}\gets\mathbf 0$
		\FOR{$t=m+1$ \TO $T-1$}
		\STATE Play $x_t$ and observe $f_t(\cdot),\,g_t(\cdot)$
		\STATE Calculate $f_t(x_{t-m}^t)$, $g_t(x_{t-m}^t)$. 
		\STATE $V_{t}\gets V_{t-1}+g_t^{+}(x_{t-m}^t)$ 
        \STATE Compute the prediction error $\epsilon_{t-m}$ as in \eqref{eq:prediction-error}.
        \STATE Compute the predictions $h_{t+1}$
        \STATE Decide decision:
        $x_{t+1}\leftarrow \texttt{ODAF}\left(\nabla Z_{1:t-m}, h_{t+1},\epsilon_{t-m}\right)$.
		\ENDFOR
	\end{algorithmic}
\end{small}    
\end{algorithm}

Having clarified the prediction and error calculations, we proceed to present the learning mechanism, which is summarized in Algorithm \ref{algo:memory-coco_predic}. At each round \(t\) the learner chooses an decision \(x_t\), observes the realized loss and constraint functions (line 3 ), and updates the multiplier \(\Phi'\!\bigl(V_{t-m-1}\bigr)\) for use at the next step. It then evaluates the prediction–error (line 5) and \emph{predicts} the forward loss \(h_{t+1}\) for the next round; finally, it feeds the \texttt{ODAF} routine (line 7) with the cumulative \emph{revealed} gradients \(\nabla Z_{1:t-m}\), the {hint}\(h_{t+1}\), and the prediction error \(\epsilon_{t-m}\), to find the next decision. Due to lack of space, the details for \texttt{ODAF} are deferred to Appendix. Essentially, using the hints and prediction errors designed specifically for our problem, one can readily call the algorithm from \cite{flaspohler2021online}. The next theorem establishes regret and CCV guarantees for this optimistic setting. 
\begin{theorem}\label{Theom:memory-prediction}
Under the following conditions:
\begin{itemize}
    \item Assumptions \ref{ass:domain}, \ref{ass:convex}, \ref{ass:lipschitz}, \ref{ass:separable} and \ref{ass:linearity} hold;
    \item The update rule is \texttt{ODAF};
    \item \( \Phi(V)\!=\!\exp(\lambda V)\!-\! 1 \), with
    \(
    \lambda = \frac{1}{2 \left( C \sqrt{\mathcal{E}_T(g^+)} + G(m+1) \right)},
    \)
\end{itemize}
the following bounds hold:
\begin{align}
    \mathcal{R}_T^{mc} &= \mathcal{O}\left( \sqrt{\mathcal{E}_T(f)} \right), \\
    \mathcal{V}_T^{mc} &= \mathcal{O}\left( \left( \sqrt{\mathcal{E}_T(g^+)}+ m \right) \log T \right).
\end{align}
\end{theorem}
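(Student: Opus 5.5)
The plan is to combine Lemma~\ref{lem:memory-prediction-forward-decomposition} with the delayed-optimistic regret guarantee of \texttt{ODAF} from \cite{flaspohler2021online}. Then we split the resulting inequality into a regret part and a violation part, exactly as in the penalty-based optimistic COCO analysis of \cite{lekeufack2024optimistic}.

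\emph{Step 1: apply \texttt{ODAF} to the forward functions.} Under Assumption~\ref{ass:linearity}, each $Z_t$ is linear in $x_t$, apart from the $g^{i,+}$ terms, which are convex and can be linearized at $x_t$. So $\mathcal R_T(Z)$ is bounded by the linearized regret of FTRL with delay $m$ and hints $h_t$. Invoking the \texttt{ODAF} guarantee with delay $m$ gives
\begin{equation*}
\mathcal R_T(Z)\le C\sqrt{\mathcal E_T(Z)}+\text{(lower order in $m$, $\|\mathcal X\|$)},
\end{equation*}
where $C$ is a constant depending on $\|\mathcal X\|$. Next we split the hint error via the triangle inequality into loss and constraint parts. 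Since the multipliers $\Phi'(V_{\tau})$ appearing in $h_t$ use delays $V_{t-m-1+i}$ that are already known, the multipliers match between truth and hint. We then obtain $\sqrt{\mathcal E_T(Z)}\le \sqrt{2\mathcal E_T(f)}+\sqrt{2}\,\Phi'(V_T)\sqrt{\mathcal E_T(g^+)}$, using the monotonicity of $\Phi'$.

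\emph{Step 2: plug into \eqref{eq:lemma-main-2}.} This gives
\begin{equation*}
\Phi(V_T)+\mathcal R_T^{mc}\le C\sqrt{\mathcal E_T(f)}+\Phi'(V_T)\bigl(C\sqrt{\mathcal E_T(g^+)}+G(m+1)\bigr)+\Phi(V_{m-1}).
\end{equation*}
With $\Phi(V)=e^{\lambda V}-1$ we have $\Phi'(V)=\lambda e^{\lambda V}$. The choice $\lambda=1/\bigl(2(C\sqrt{\mathcal E_T(g^+)}+G(m+1))\bigr)$ makes the middle term equal to $\tfrac12 e^{\lambda V_T}$. Also $\Phi(V_{m-1})=0$, since the dual seed is zero.

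\emph{Step 3: extract the two bounds.} For the regret, use $\mathcal R_T^{mc}\le C\sqrt{\mathcal E_T(f)}+\tfrac12 e^{\lambda V_T}-e^{\lambda V_T}+1\le C\sqrt{\mathcal E_T(f)}+1$, which gives $\mathcal O(\sqrt{\mathcal E_T(f)})$. For the CCV, lower bound the regret by $-\mathcal R_T^{mc}\le FT$, or more carefully by $\mathcal O(T)$ using bounded losses against a feasible comparator. This gives $\tfrac12 e^{\lambda V_T}\le \mathcal O(T)+C\sqrt{\mathcal E_T(f)}+1=\mathcal O(T)$. Hence $V_T\le \lambda^{-1}\log \mathcal O(T)=\mathcal O\bigl((\sqrt{\mathcal E_T(g^+)}+m)\log T\bigr)$.

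\emph{Main obstacle.} The main obstacle is Step 1, specifically making the \texttt{ODAF} guarantee apply with hints and error defined through $\epsilon_{t-m}(Z)$. There are two concerns. First, the delayed multipliers must make the constraint part of the error factor as $\Phi'(V_T)$ times the pure constraint-prediction error. Second, the extra delay-induced terms of \texttt{ODAF} must be shown to be absorbed by $\sqrt{\mathcal E_T(Z)}$ rather than creating an additive $m\sqrt T$ term; otherwise it would enter the regret through $\Phi'(V_T)$. To handle this, I would first bound the gradient magnitudes by $(m+1)(L_f+\Phi'(V_T)L_g)$. I would then reuse the delayed-FTRL analysis of \cite{flaspohler2021online}, which bounds the delay penalty by the hint error itself, so the $m$-dependence stays confined to the $G(m+1)$ term already present in $\lambda$.
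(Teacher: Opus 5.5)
Your proposal is correct and follows the paper's proof essentially step for step: the forward decomposition lemma combined with the \texttt{ODAF} bound, the split $\mathcal E_T(Z)\le 2\bigl(\mathcal E_T(f)+\Phi'(V_T)^2\mathcal E_T(g^+)\bigr)$ via monotonicity of $\Phi'$, the choice of $\lambda$ that turns the $\Phi'(V_T)$ coefficient into $\tfrac12 e^{\lambda V_T}$, and $\mathcal R_T^{mc}\ge -2FT$ to extract the CCV bound. (One caveat, which the paper also glosses over: the multipliers $\Phi'(V_{s-m-1+i})$ differ across the summed terms, so strictly the constraint error must be measured termwise rather than as the norm of a sum.) The delay obstacle you flag is resolved in the paper's corollary exactly as you suggest, through $a_t\le\|\mathcal X\|\,\|h_t-\nabla Z_{t-m:t}\|$ and $b_t\le\tfrac12\|h_t-\nabla Z_{t-m:t}\|^2$, so $2\max_t a_{t-m:t-1}\le 2m\|\mathcal X\|\sqrt{\mathcal E_T(Z)}$ is absorbed into $C$ (which therefore carries a factor $m$, so the $m$-dependence is not confined to $G(m+1)$) and no gradient-magnitude bound is needed.
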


\textbf{Discussion.} Let us start by noting that the value of~\(\lambda\) depends on the (unknown) prediction error \(\mathcal{E}_{T}(g)\), but we can adjust it online via a \emph{doubling trick} that adds only an extra \(\log T\) to the bounds, similar in spirit to \cite{lekeufack2024optimistic}; the full analysis is in Appendix. As prediction accuracy improves, the bounds tighten, where under perfect prediction, they reach $\mathcal{O}(1)$ regret and $\mathcal{O}(m\log T)$~CCV. On the other hand, even if the predictors fail completely, the algorithm guarantees regret \(\mathcal{O}(m^2\sqrt{T})\) and CCV \(\mathcal{O}(m^2\sqrt{T}\log T)\). These bounds are tighter than those of Section~\ref{sec:COCO with memory without predic}, but they refer to a more restrictive benchmark $\mathcal{X}_T^{mp}\!\subset\!\mathcal{X}_T^{m}$. What is more, the predictions accuracy does not need to be known in advance, and, further, our solution allows the oracle to update its forecast at every round and benefit from more accurate information whenever available. This flexibility is crucial as predictions can indeed improve with time. Finally, we note that these bounds include as special cases important prior works. When $m\!=\!0$, the rates coincide with those of \cite{lekeufack2024optimistic},
who study COCO with predictions but \emph{no} memory.  
For OCO-M, \cite{mhaisen2024optimistic}
obtain \(\mathcal{O}(1)\) regret under perfect predictions, relying on the delayed-feedback framework of \cite{flaspohler2021online}; our analysis yields the same bound when constraints are omitted. Our work extends these ideas to time-varying \emph{and} memory-dependent constraints. 

There are also some important notes in place regarding the Assumptions. First, observe that the surrogate function uses the delayed penalty $V_{t-m-1}$ because at~$t$ the freshest \emph{known} value is $V_{t-1}$. Relying on $V_{t-m-1}$ let us form $\nabla \tilde Z_t$ without forecasting the entire future constraint $g_t(x_{t-m}^t)$, and yet it does not affect the bound. Secondly, due to Assumption~\ref{ass:linearity}, the gradient of $f_{t+i}^{\,i}$ is the constant coefficient vector (independent of~$x_t$), and for the constraint
\(
  g_{t+i}^{i,+}(x)=\max\{0,\,a_{t+i}^{i}x+b_{t+i}^{i}\},
\)
we only need to predict the sign of $a_{t+i}^{i}x_t+b_{t+i}^{i}$. Thus the predictor only guesses the half‑space of~$x_t$, a far weaker requirement than guessing the exact $\tilde x_t$. However, this assumption can be lifted if a predictor is available that directly provides an estimate $\tilde x_t$ of $x_{t}$.

Finally, there is an interesting trade-off between the assumptions about the predictions and the problem structure. In general, satisfying the memory-based constraints $g_t(x_{t-m}^t)$ would require not only forecasting the future loss and constraints, but also their dependence on the yet-unknown decisions $x_{t+1},\ldots, x_{t+m}$, which in turn demands perfect predictions for a look‑ahead horizon $H=\Theta(\log T)$ as in \cite{yu2020power}. For shorter or imperfect forecasts, the sublinear bounds are not guaranteed. To sidestep this, we invoke Assumption \ref{ass:separable} and replace the comparator set $\mathcal X_T^m$ with $\mathcal X_T^{\mathrm{mp}}$. This relaxation allows us recasting it as a memory-less problem with {delayed gradients}. Consequently, we recover sublinear bounds on both regret an CCV, even under untrusted predictions. The reader might recall that similar concessions about the benchmark set are made in traditional COCO, where $\c X$ is reduced to $\c X_T$ so as to avoid the impossibility result of \cite{mannor2009online}. Making bolder assumptions for the availability of more informative predictions to learn against an expanded benchmark set is certainly a direction where our framework can be extended.

\section{Conclusions}

As discussed, \texttt{COCO-M\textsuperscript{2}} appears in many real systems, e.g., smart-grid energy budgets, battery-health limits, etc., and directly captures the handling of constraints in NSC. Our penalty method tightens the only prior \texttt{COCO-M\textsuperscript{2}} rates of \cite{liu2023online} ($\mathcal O(T^{2/3}\log^{2}T)$) to  
\(
\mathcal R_T=\mathcal O\!\bigl(m^{3/2}\sqrt{T\log T}\bigr) \)  and
\(\mathcal V_T=\mathcal O\!\bigl(\max\{T^{3/4},\,m^{3/2}\sqrt{T\log T}\}\bigr),
\)
and extends the analysis to the \texttt{COCO-M} case. Moreover, this is the first work to study \emph{untrusted} gradient forecasts for time-varying COCO problems with memory. The proposed optimistic algorithm achieves  
\(
\mathcal R_T\!=\!\mathcal O\!\bigl(\sqrt{\mathcal E_{Tf}}\bigr)\) and \(
\mathcal V_T\!=\!\mathcal O\!\bigl(\max\{\sqrt{\mathcal E_{Tg}}\log T,\,m\log T\}\bigr),
\)
matching the results of \cite{lekeufack2024optimistic} without memory and reducing to them when $m=0$. Indeed, previous COCO, OCO-M and optimistic OCO bounds emerge as special cases of our framework.  
Finally, as future work, these techniques can be extended to dynamic (adaptive) regret metrics via static-to-dynamic reductions, and with the design of penalties that react to time-varying windows and constraint hardness.

\newpage

\section*{Acknowledgments}
The work was supported in part by the Dutch National Growth Fund through the 6G flagship project “Future Network Services” and by the European Commission under Grants 101139270 (ORIGAMI) and 101192462 (FLECON-6G), and in part by the French government through the France 2030 program within the Celtic RAI-6green project.

\bibliography{aaai2026}


\clearpage
\newpage
\setcounter{secnumdepth}{2}
\appendix                    
\setcounter{section}{0}      
\renewcommand{\thesection}{Appendix~\Alph{section}}

\renewcommand{\thesubsection}{\Alph{section}.\arabic{subsection}}

In the following, we present complete proofs for the technical results in the main submission and include a dedicated numerical-experiments section. \ref{app:numerics} presents numerical experiments that corroborate our theoretical bounds, while \ref{app:section4} and \ref{app:section5} contain the full proofs of the theorems stated in Section \ref{sec:COCO with memory without predic} and Section \ref{sec:COCO with memory and prediction}, respectively, together with the auxiliary lemmas on which those proofs rely. {In \ref{app:section4}, we additionally state and prove bounds for special cases of the \texttt{COCO-M} problem.} Moreover, Table \ref{tab:related-work} presents a detailed state-of-the-art comparison: for every problem variant to which our algorithm applies, it lists our regret and constraint-violation bounds and places them side by side with the best bounds reported in the most relevant prior work. Finally, Figure \ref{fig:Notations} lists every symbol and domain used in the paper.

\begin{figure}[t]
\begin{center}
\includegraphics[width=0.99\columnwidth]{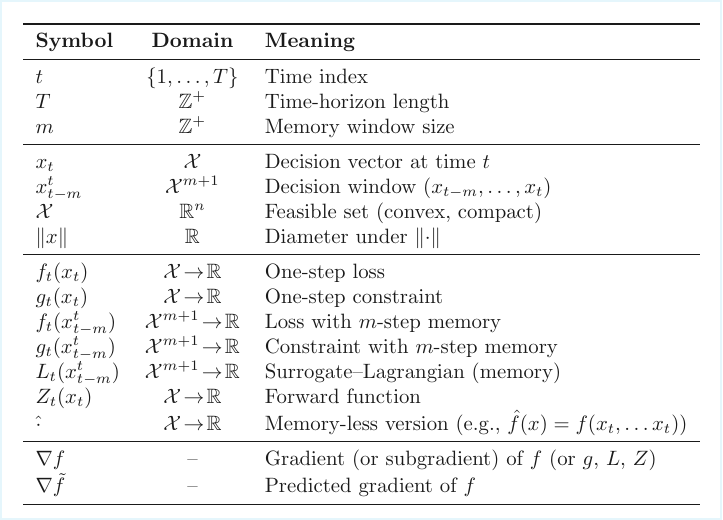}
\end{center}
\vspace{-4mm}
\caption{Notation used throughout the paper}
\label{fig:Notations}
\end{figure}

\begin{figure*}[t]
  \centering
  \begin{minipage}{0.45\textwidth}
    \centering
    \includegraphics[width=\linewidth]{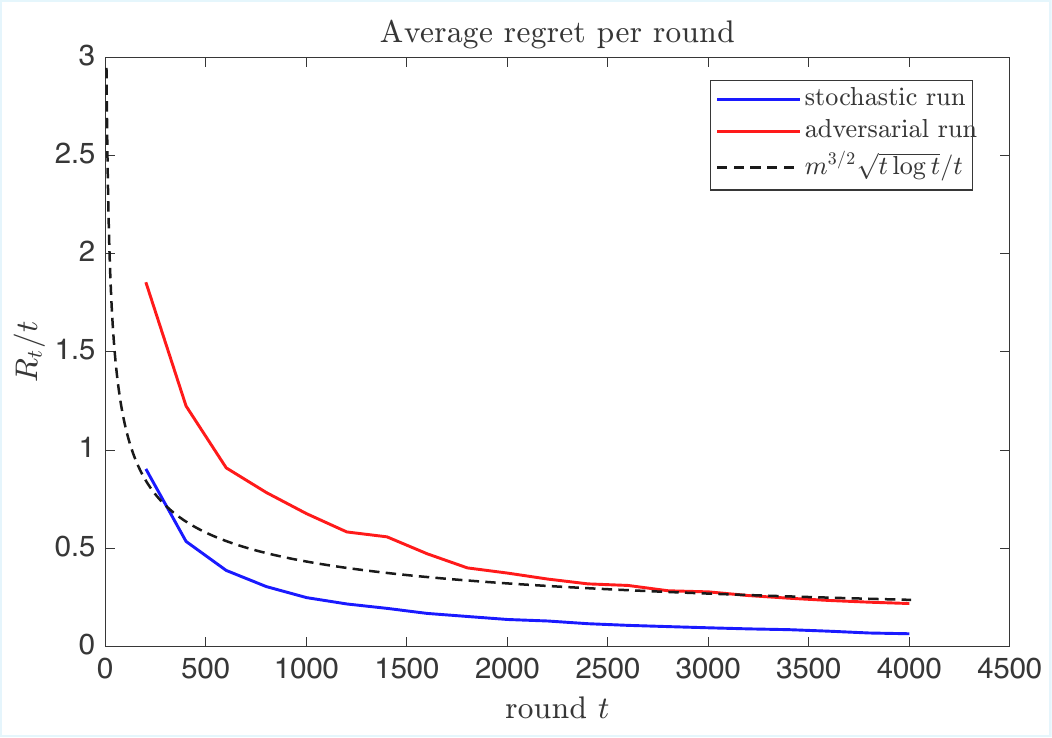}
    \vspace{-0.4\baselineskip}
    \par\smallskip
    \textbf{(a)} Average regret $R_t/t$
  \end{minipage}\hfill
  \begin{minipage}{0.45\textwidth}
    \centering
    \includegraphics[width=\linewidth]{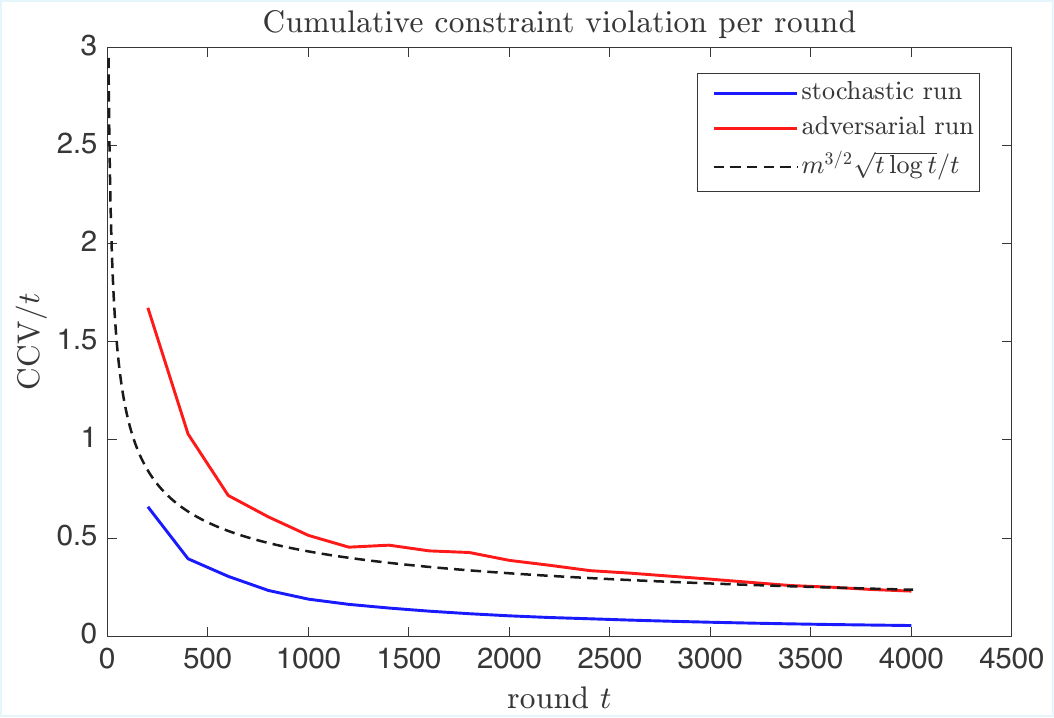}
    \vspace{-0.4\baselineskip}
    \par\smallskip
    \textbf{(b)} Cumulative constraint violation $V_t/t$
  \end{minipage}
  \caption{Performance of Algorithm \ref{algo:memory-coco} on
           $\texttt{COCO‑M}^{2}$ with $x^{\star}\!\in\!\mathcal{X}^{\text{m}}_T$.   }
  \label{fig:sim-plots}
\end{figure*}


\begin{table*}[t]
\centering
\footnotesize
\setlength{\tabcolsep}{5.5pt}
\begin{tabular}{|l|l|l|cc|c|p{3cm}|p{3.7cm}|}
\hline
\textbf{Reference} & \textbf{Constraints} & \textbf{loss} &
\multicolumn{2}{c|}{\textbf{Memory}} &
\textbf{Pred.} & \textbf{Regret} & \textbf{Violation} \\ \cline{4-5}
 & & & Obj. & Constr. & & & \\ \hline
Liu et al 
& adversarial        & convex &
\cmark & \cmark & \xmark &
$\mathcal{O}\!\bigl(T^{2/3}\log^{2}\!T\bigr)$  &
$\mathcal{O}\!\bigl(T^{2/3}\log^{2}\!T\bigr)$ (CCV) \\ \hline
Li et al 
& affine (static)     & convex &
\cmark & \cmark & \xmark &
$\widetilde{\mathcal{O}}\!\bigl(C_s\sqrt{T}\bigr)$ &
$\mathcal{O}(1)$ \\ \hline
Anava et al 
& none               & convex &
\cmark & \xmark & \xmark &
$\mathcal{O}\!\bigl(m^{3/2}\sqrt{T}\bigr)$ & n/a \\ \hline
Mhaisen et al 
& none           & linear &
\cmark & \xmark & \cmark &
$\mathcal{O}(\mathcal{E}_T)$  & n/a \\ \hline
Lekeufack et al 
& adversarial & convex &
\xmark & \xmark & \cmark &
$\mathcal{O}\!\bigl(\sqrt{\bar{\mathcal{E}}_f}\bigr)$  &
 $\mathcal{O}\!\bigl(\log T \, (\sqrt{\bar{\mathcal{E}}_g}+1)\bigr)$ CCV \\ \hline
\rowcolor{gray!30}
\textbf{This work}                      & 
\begin{tabular}[t]{@{}l@{}}%
adversarial\\[4pt]
adversarial\\[4pt]
adversarial \\[4pt]
adversarial\\[4pt]
ad. separable\\
\end{tabular}
       & \begin{tabular}[t]{@{}l@{}}%
convex\\[4pt]
convex\\[4pt]
convex \\[4pt]
ad. separable\\[4pt]
ad. separable\\
\end{tabular} &
\begin{tabular}[t]{@{}l@{}}%
\xmark\\[4pt]
\cmark\\[4pt]
\cmark \\[4pt]
\cmark\\[4pt]
\cmark\
\end{tabular} & \begin{tabular}[t]{@{}l@{}}%
\xmark\\[4pt]
\xmark\\[4pt]
\cmark \\[4pt]
\xmark\\[4pt]
\cmark\\
\end{tabular} &
\begin{tabular}[t]{@{}l@{}}%
\xmark\\[4pt]
\xmark\\[4pt]
\xmark \\[4pt]
\cmark\\[4pt]
\cmark
\end{tabular}  & 
\begin{tabular}[t]{@{}l@{}}%
$\mathcal{O}(\sqrt{T})$\\[2pt]
$\mathcal{O}(m^{\frac{3}{2}} \sqrt{T\log(T)})$\\[2pt]
$\mathcal{O}(m^{\frac{3}{2}}\sqrt{T\log T})$\\[2pt]
$\mathcal{O}(\sqrt{\mathcal{E}_T(f)})$\\[2pt]
$\mathcal{O}(\sqrt{\mathcal{E}_T(f)})$
\end{tabular}
& \begin{tabular}[t]{@{}l@{}}%
$\mathcal{O}(\sqrt{T\log T})$\\[2pt]
$\mathcal{O}\left(T^{3/4}\right)$\\[2pt]
$\mathcal{O}(m^{\frac{3}{2}}\sqrt{T\log T} \vee T^{\frac{3}{4}})$ \\[2pt]
$\mathcal{O}((\sqrt{\bar{\mathcal{E}}_T(g^+)}+1) \log T)$ \\[2pt]
$\mathcal{O}((\sqrt{\mathcal{E}_T(g^+)}+1) \log T)$ 
\end{tabular}
\\ \hline
\end{tabular}
\caption{Summary of the most relevant OCO/COCO results. Violation bounds are stated as \emph{CCV} ($\sum_{t} g_t^+$). Our work is the first to unify \emph{memory in both objective and constraint} with \emph{imperfect predictions} while guaranteeing simultaneous sub-linear regret and CCV. 
}
\label{tab:related-work}
\end{table*}

\section{Numerical evaluation}\label{app:numerics}

\subsection{Numerical Experiments}

To verify the theoretical bounds, we conduct numerical simulations on a small \texttt{COCO-M\textsuperscript{2}} problem involving quadratic losses and linear constraints, where the penalty function is \(\Phi(V) = \lambda V^{2}\). The experiments consider a one–dimensional case (\(d=1\)), where at each round \(t \in \{m,\dots,T\}\) the learner selects \(x_t \in \mathbb{R}\) and incurs loss:
\[
f_t(x_{t-m}^{t}) = \frac{1}{m+1} \sum_{i=0}^{m} \frac12 (x_{t-i} - c_t)^{2},
\]
and has a constraint:
\[
g_t(x_{t-m}^{t}) = \frac{1}{m+1} \sum_{i=0}^{m} d_t x_{t-i} - \delta,
\]
where \(c_t,d_t \in [-B,B]\) and \(\delta > 0\) ensures the bounded–gradient assumption required by our theory. The learner must keep the running average of \(g_t^+(x_{t-m}^{t})\) small while remaining within the Euclidean ball \(|x_t| \le R\).  

We evaluate two scenarios. In the stochastic setting, each coefficient pair \((c_t,d_t)\) is drawn i.i.d.\ from the centered uniform distribution \(\mathcal{U}[-\sigma,\sigma]\). In the adversarial setting, after observing \(x_t\), an oblivious adversary selects \((c_t,d_t)\) by drawing from \(\mathcal{U}[-\sigma,\sigma]\) with probability \(0.4\) and from \(\mathcal{N}(0,\sigma^2)\) with probability \(0.6\).  

Figure~\ref{fig:sim-plots}(a) reports the cumulative regret for different time-horizons of length $t$:
\[
\sum_{\tau=1}^{t} f_{\tau}(x_{\tau-m},\dots,x_{\tau}) - \min_{g_{\tau}(x) \le 0} f_{\tau}(x),
\]
and similarly Figure~\ref{fig:sim-plots}(b) shows the cumulative constraint violation
\[
\sum_{\tau=1}^{t} g_{\tau}^+(x_{\tau-m},\dots,x_{\tau}).
\]
Unless otherwise specified, the parameters are set to \(m = 3,\ t\leq T = 4000,\ R = 15,\ \sigma = 10,\ \delta = 1,\ B = \gamma\sigma\) with \(\gamma = 3,\) and \(\lambda = 1/\sqrt{t}\). 

{All code and data used for the computational experiment will be made publicly available at:
https://github.com/mohammadabduallah/COCO-Memory-AAAI26
}

\section{ Proofs of Section \ref{sec:COCO with memory without predic}}\label{app:section4}
\subsection{Proof of Theorems \ref{thm:full-memory}  and \ref{thm:obj-memory}}\label{appen:theom-obj-full-mem}

The proof proceeds in two stages. First, we derive the regret and CCV bounds for the \emph{memory-less} version of the problem; and next, we lift those bounds to the memory-dependent setting via the Lipschitz property of the function, as shown in \eqref{eq:objective_memory_decom} and \eqref{eq:constraint_memory_decom}.

\paragraph{Memory-less problem }
\begin{proof}[Proof Theorem \ref{thm:obj-memory}]
We begin by bounding the regret \(\mathcal{R}_{T}(\hat{\mathcal{L}})\).
This will be fed directly to the bounds for
\(\hat{\mathcal{R}}_{T}^{mc}\) and \(\hat{V}_{T}\).
Note that \(\hat{V}_{T}\) coincides with
\(\mathcal{V}_{T}^{c}\) in \(\texttt{COCO\text{-}M}\), whereas in \(\texttt{COCO\text{-}M}^{2}\) it becomes
\(\hat{\mathcal{V}}_{T}^{mc}\), i.e., the CCV associated with the memory-less constraint. 

By Lemma~\ref{lem:OGD_adaptive_bound}, and using that $\Phi(V)=\lambda V^2$ we have:
\begin{align*}
    \c R_T(\hat{\c L}) &\leq \sqrt{2}\| \mathcal{X}\| \sqrt{\sum_{t=1}^{T} \| \nabla \hat{\c L}_t(x_t)\|^2}\\
    &\stackrel{(i)}\leq  \sqrt{2}\| \mathcal{X}\| \sqrt{T(L_f+ 2\lambda \hat{V}_T L_g)^2}\\
    &\leq  \sqrt{2}\| \mathcal{X}\|\sqrt{T}(L_f+2\lambda \hat{V}_TL_g)\\
    &\leq \sqrt{2 T} \|\mathcal{X}\| L_f + 2 \lambda \hat{V}_T (\sqrt{2T} \|\mathcal{X}\|) L_g,
\end{align*}
where in $(i)$ we used the Lipschitz condition in
Assump.~\ref{ass:lipschitz}: 
\begin{equation}\label{eq:1}
    \sup_{t}\bigl\|\nabla L_t(x)\bigr\|\;\le\; L_f + \Phi'(\hat{V}_T) L_g=L_f+ 2\lambda \hat{V}_T L_g.
\end{equation}

Using $2ab \leq a^2 + b^2$ with $a \!=\! \sqrt{\lambda }\hat{V}_T$, $b\!=\!\sqrt{2 \lambda T} \|\mathcal X\| L_g$ we can upper bound the LHS of equation \eqref{eq:1} 
\[
\mathcal{R}_T(\hat{\mathcal{L}}) \leq \sqrt{2 T} \|\mathcal{X}\| L_f + \lambda \hat{V}_T^2 + 2\lambda T \|\mathcal X\| ^2 L_g^2.
\]

Now by the regret decomposition Lemma \ref{lem:Regret_decom_no_predic} we have that:
\[
\Phi(\hat{V}_T) - \Phi(\hat{V}_m) + \hat{\mathcal{R}}_T^{mc}\leq \mathcal{R}_T(\hat{\mathcal{L}}).
\]

By setting $\lambda=1/\sqrt{T}$ we get the following:
\begin{align*}
\hat{\mathcal{R}}_T^{mc} &\leq \sqrt{2 T} \|\mathcal{X}\| L_f + \lambda \hat{V}_T^2  -\lambda \hat{V}_T^2 \\
&+ 2\lambda T \| \mathcal X\| ^2 L_g^2  +\Phi(\hat{V}_m)\\
&\leq \sqrt{2 T} \|\mathcal{X}\| L_f + 2 \sqrt{T} \|\mathcal X\| ^2 L_g^2 +\Phi(\hat{V}_m)   \:\: \text{($\lambda =\frac{1}{\sqrt{T}}$)}\\ &=\mathcal{O}(\sqrt{T}).
\end{align*}

For $\hat{V}_T$, by the regret decomposition Lemma \ref{lem:Regret_decom_no_predic} we have:
\[
\Phi(\hat{V}_T) -\Phi(\hat{V}_m) \leq- \hat{\mathcal{R}}_T^{mc} + \mathcal{R}_T(\hat{\mathcal{L}}),
\]
and knowing that $\hat{\mathcal{R}}_T^{mc}\geq - 2FT$, we get:
\begin{align*}
      \lambda \hat{V}_T^2 - \lambda \hat{V}_m^2 & \leq \sqrt{2T}\|\mathcal{X}\| L_f+ 2\lambda \hat{V}_T(\sqrt{2T}\|\mathcal{X}\|L_g) + 2FT.
\end{align*}
Dividing both sides by $\lambda $:
$$\hat{V}_T^2- 2\hat{V}_T(\sqrt{2T}\|\mathcal{X}\| L_g)  \leq T\sqrt{2}\|\mathcal{X}\| L_f + 2FT^{3/2} +\hat{V}^2_m$$
we set $\hat{V}_m\!=\!0$ w.l.o.g., and add $2T\|\mathcal{X}\|^2L_g^2$ to both sides: 
$$(\hat{V}_T -\sqrt{2T}\|\mathcal{X}\| L_g)^2 \leq 2T\|\mathcal{X}\|^2L_g^2+ T \sqrt{2}\|\mathcal{X}\|L_f + 2FT^{3/2}.$$
And finally, we have that:
\begin{align*}
\hat{V}_T &\leq \sqrt{2T\|\mathcal{X}\|^2L_g^2+ T \sqrt{2}\|\mathcal{X}\|L_f + 2FT^{3/2}} \\
&+\sqrt{2T}\|\mathcal{X}\| L_g = \mathcal{O}(T^{3/4}) \\
\end{align*}

\paragraph{Lifting bounds to $\texttt{COCO-M}$ problem.} 
We note that in this problem it holds $\hat{V}_T=\mathcal{V}_T^c$; thus the bound we already have for $\hat{V}_T$  applies directly here. Hence it is
$\mathcal{V}_T^c\!=\!\mathcal{O}(T^{\frac{3}{4}})$.

For the regret we have the following relation:
\begin{align*}
	\c{R}_{T}^{c}
	&= \sum_{t=m}^{T}
		f_t(x_{t-m},\dots,x_t)-f_t(x_t,\dots,x_t)+\hat{\c{R}}_{T}^c\\
        &\leq \sum_{t=m}^{T}| f_t(x_{t-m},\dots,x_{t})-f_t(x_{t},\dots,x_t) |+ \hat{\c{R}}_{T}^c.
\end{align*}
Building on \cite{anava2015online} and using the standard one–step OGD bound
\(
\|x_{t+1}-x_t\|\le\eta_t
     \bigl\|\nabla\hat{\mathcal{L}}_t(x_t)\bigr\|,
\)
we obtain:
\begin{align*}
   &| f_t(x_{t-m}, \dots, x_t) - f_t(x_t, \dots, x_t) |^2 \\
   &\le L_f^2 \| (x_{t-m}, \dots, x_t) - (x_t, \dots, x_t) \|^2 \\
    &\leq L_f^2 \sum_{j=1}^{m} \| x_t - x_{t-j} \|^2 \\
    &\le L_f^2 \sum_{j=1}^{m} \left( \sum_{l=1}^{j} \| x_{t-l+1} - x_{t-l} \| \right)^2 \\
    &\leq L_f^2 \sum_{j=1}^{m} \left( \sum_{l=1}^{j} \eta_{t-l}\| \nabla \hat{\mathcal{L}}_{t-l}(x_{t-l} \| \right)^2
\end{align*}
where $\eta_t$ is the step-size of the OGD algorithm: 
\[
\eta_t = \frac{ \|\mathcal{X}\|}{\sqrt{2}\sqrt{\sum_{k=1}^t \|\nabla \hat{\mathcal{L}}_k(x_k)\|^2}}.
\]
Thus, we can write $\c{R}_{T}^{c} \leq$
\begin{align*}
\hat{\c{R}}_{T}^c +\frac{\|\mathcal{X}\|}{\sqrt{2}} L_f \sqrt{  \sum_{j=1}^{m} \left( \sum_{l=1}^{j}\frac{\| \nabla \hat{\mathcal{L}}_{t-l}(x_{t-l} \|}{\sqrt{\sum_{\tau=1}^{t-l} \|\nabla \hat{\mathcal{L}}_\tau(x_\tau)\|^2}} \right)^2}
\end{align*}

Define $\delta_t = \sum_{k=1}^t \|\nabla \hat{\mathcal{L}}_k(x_k)\|^2$, then 
\[
\frac{\| \nabla \hat{\mathcal{L}}_{t-l}(x_{t-l})\|}{\sqrt{\delta_{t-l}}}=\sqrt{\frac{\delta_{t-l}-\delta_{t-l-1}}{\delta_{t-l}}} \in (0,1].
\]
Let $v_j\!\doteq\!\sum_{l=1}^{j}\sqrt{\frac{\delta_{t-l}-\delta_{t-l-1}}{\delta_{t-l}}} $, then by Cauchy–Schwarz:
\begin{equation*}
    v_j^2 \leq j \sum_{l=1}^{j}\frac{\delta_{t-l}-\delta_{t-l-1}}{\delta_{t-l}}  \leq m\sum_{l=1}^{m}\frac{\delta_{t-l}-\delta_{t-l-1}}{\delta_{t-l}}.
\end{equation*}
Taking the summation from $j=1$ to $m$, we have:
\[
\sum_{j=1}^m v_j^2 \leq m^2 \sum_{\ell=1}^m \frac{\delta_{t-l} - \delta_{t-l-1}}{\delta_{t-l}}.
\]
This leads to the following: 
\begin{align*}
   & L_f \sum_{t=m}^T \sqrt{ \sum_{j=1}^m \left( \sum_{l=1}^j \eta_{t-l} \|\nabla \hat{\mathcal{L}}_\ell(x_{t-l})\| \right)^2 }\\
   &\leq m L_f \frac{\|\mathcal{X}\|}{\sqrt{2}} \sum_{t=m}^{T} \sqrt{\sum_{j=1}^{m}\frac{\delta_{t-l}-\delta_{t-l-1}}{\delta_{t-l}}}.
\end{align*}
Now by the concavity of the square root, we can write: 
\begin{align}
    \frac{1}{T-m} \sum_{t=m}^{T}\sqrt{y_t}\leq \sqrt{\frac{\sum_{t=m}^{T}y_t}{T-m}}
\end{align}
and substituting $y_t\!=\!\sum_{l=1}^{m}\frac{\delta_{t-l}-\delta_{t-l-1}}{\delta_{t-l}}$, we get:
\begin{align*}
&m L_f \sqrt{2}\|\mathcal{X}\| \sum_{t=m}^{T} \sqrt{\sum_{j=1}^{m}\frac{\delta_{t-l}-\delta_{t-l-1}}{\delta_{t-l}}} \\
&\stackrel{(ii)}\leq m L_f  \frac{\|\mathcal{X}\|}{\sqrt{2}}  \sqrt{T-m} \sqrt{\sum_{t=m}^{T}\sum_{l=1}^{m} \frac{\delta_{t-l}-\delta_{t-l-1}}{\delta_{t-l}}}\\
&\leq m L_f  \frac{\|\mathcal{X}\|}{\sqrt{2}} \sqrt{T} \sqrt{m\sum_{t=1}^{T}\frac{\delta_t-\delta_{t-1}}{\delta_t}} \\
&\leq m^{\frac{3}{2}} L_f  \frac{\|\mathcal{X}\|}{\sqrt{2}} \sqrt{T}\sqrt{\sum_{t=1}^{T}\frac{\delta_t-\delta_{t-1}}{\delta_t}},
\end{align*}
where $(ii)$ holds because the double sum lists every element $m$-times for each $t$.

We have that for any $0<b<a$, $\frac{a-b}{a}\leq \log a -\log b $, and without loss of generality we assume that $\delta_0=1$, thus we get the following:
\begin{align*}
\sum_{t=1}^{T}\frac{\delta_t-\delta_{t-1}}{\delta_t} &\leq \sum_{t=1}^{T} \log(\delta_t) -\log (\delta_{t-1})\\
&\leq \log(\delta_T)-\log(\delta_0)\leq \log(\delta_T)
\end{align*}
Thus we have that: 
\begin{align*}
    &m L_f  \frac{\|\mathcal{X}\|}{\sqrt{2}}  \sum_{t=m}^{T} \sqrt{\sum_{j=1}^{m}\frac{\delta_{t-l}-\delta_{t-l-1}}{\delta_{t-l}}} \\
    & \leq m^{\frac{3}{2}} L_f  \frac{\|\mathcal{X}\|}{\sqrt{2}}   \sqrt{T\log(\delta_T)}\\
    &\leq m^{\frac{3}{2}} L_f  \frac{\|\mathcal{X}\|}{\sqrt{2}}   \sqrt{T}\sqrt{\log\Big(\sum_{t=1}^{T}\|\nabla \hat{\mathcal{L}}_t(x_t)\|^2}\Big)\\
  &  \leq m^{\frac{3}{2}} L_f  \frac{\|\mathcal{X}\|}{\sqrt{2}}   \sqrt{T}\sqrt{\log\Big(T(L_f+\Phi'(V_T)L_g )^2\Big) }\\
&  \leq  m^{\frac{3}{2}} L_f  \frac{\|\mathcal{X}\|}{\sqrt{2}}  \sqrt{T} \sqrt{\log(T) + 2\log(L_f+\sqrt{T}L_g)}
\end{align*}
As a result we have:
\begin{align*}
    \mathcal{R}_T^{mc}&\leq \sqrt{2 T} \|\mathcal{X}\| L_f + 2 \sqrt{T} \| X\| ^2 L_g^2  \\
    &+ m^{\frac{3}{2}} L_f  \frac{\|\mathcal{X}\|}{\sqrt{2}}  \sqrt{T} \sqrt{\log(T) + 2\log(L_f+\sqrt{T}L_g)} \\
    &= \mathcal{O}\Big(m^{\frac{3}{2}}\sqrt{T\log T}\Big)
\end{align*}

{Since we also have $$\mathcal{V}_T^c=\hat{V}_T=\mathcal{O}(T^{\frac{3}{4}}),$$ this completes the proof of Theorem \ref{thm:obj-memory}.}
\end{proof}

\paragraph{Bounds for $\texttt{COCO-M}^2$}
\begin{proof}[Proof of Theorem \ref{thm:full-memory}]
Here the constraint is memory–dependent,
\(g_t(x_{t-m}^{\,t})\).
Because the regret analysis involves only the use of $\hat{\mathcal{V}}_T$, that is in this case $\hat{\mathcal{V}}_T^{mc}$ the CCV for the memory-less constraint $\hat{g}_t$, the bound is identical to that obtained for \(\texttt{COCO-M}\):
\[
\mathcal{R}_{T}^{mc}
  \;=\;
  \mathcal{O}\!\bigl(m^{3/2}\sqrt{T\log T}\bigr).
\]

Now for $\mathcal{V}_T^{mc}$ we use the same decomposition as before. Since \(g_t\) is \(L_g\)-Lipschitz, its positive part
\(g_t^{+}\) inherits the same constant. Hence we can write:
\begin{align*}
\mathcal{V}_{T}^{mc}
  &\leq 
     \sum_{t=m}^{T}
     | g_t^+(x_{t-m},\dots,x_{t})-g_t^+(x_t,\dots,x_t)|
     +
     \hat{\mathcal{V}}_{T}^{\,c} 
\end{align*}
where
\(\hat{\mathcal{V}}_{T}^{\,c}=\hat{V}_T = \mathcal{O}(T^{\frac{3}{4}})\) ( already bounded before).

Following the same steps as before, but with \(L_g\) in place of \(L_f\), we obtain:
\begin{align}
    & \sum_{t=m}^{T}
     | g_t^+(x_{t-m},\dots,x_{t})-g_t^+(x_t,\dots,x_t)|\\
     &\leq m^{\frac{3}{2}} L_g  \frac{\|\mathcal{X}\|}{\sqrt{2}}  \sqrt{T} \sqrt{\log(T) + 2\log(L_f+\sqrt{T}L_g)}.
\end{align}
Combining both results, yields:
\begin{align*}
    \mathcal{V}_T^{mc}&\leq  \sqrt{2T}\|\mathcal{X}\|\\
    &+\sqrt{2T\|\mathcal{X}\|^2L_g^2+ T \sqrt{2}\|\mathcal{X}\|L_f + 2FT^{3/2}} \\
    &+ m^{\frac{3}{2}} L_g  \frac{\|\mathcal{X}\|}{\sqrt{2}}  \sqrt{T} \sqrt{\log(T) + 2\log(L_f+\sqrt{T}L_g)}\\
    &= \mathcal{O}\left(\max\{T^{3/4},m^{\frac{3}{2}}\sqrt{T\log T}\right)
\end{align*}
which completes the proof.
\end{proof}

\subsection{\(\texttt{COCO-M}\) with a Short Memory Window}\label{sec:short-memory}
When the memory length satisfies
\[
  m\le
  T^{1/6}/({\log T})^{1/3},
\]
we can obtain tighter bounds by replacing the quadratic penalty with the \emph{exponential} penalty
\(
  \Phi(V)=e^{\lambda V}-1.
\)
The next theorem gives the resulting rates.

\begin{theorem}[Regret and CCV with memory-free constraint]\label{thm:short-window}
When \textnormal{(i)}~Assumptions \ref{ass:domain}, \ref{ass:convex}, and \ref{ass:lipschitz} hold, \textnormal{(ii)} the constraint is memory-less, $g_t(x_t)$, and \textnormal{(iii)} we use the step 
\(\eta_t= \frac{\sqrt{2}\|\mathcal{X}\|}{2\sqrt{\sum_{\tau=1}^{t} \|\nabla \hat{\mathcal{L}}_t(x_t)\|^2}}\)
and the penalty \(\Phi(V)\!=\!\exp(\lambda V)\!-\!1\)
\begin{align*}
\text{with} \ \ \lambda= \frac{0.5}{\sqrt{2T}\|\mathcal{X}\|L_g  +  m^{\frac{3}{2}}  \|\mathcal{X}\| \sqrt{TL_fL_g} }
\end{align*}
Then, for any $T\!\ge\! m$, it holds:
\begin{align}
\c{R}_{T}^{mc}&= \mathcal{O}\Big(\sqrt{T}+m^{\frac{3}{2}} \sqrt{T\log(T)}\Big),\\ 
\c{V}_T^{c}&= \mathcal{O}\Big(\sqrt{T\log T}+m^{\frac{3}{2}}\sqrt{T}\log T\Big)\notag
\end{align}
\end{theorem}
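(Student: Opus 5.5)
Our plan follows the structure of the proof of Theorem~\ref{thm:obj-memory}, replacing the quadratic penalty by $\Phi(V)=e^{\lambda V}-1$ and arguing in the style of \cite{sinha2024optimal}. We first bound the surrogate regret with Lemma~\ref{lem:OGD_adaptive_bound}. The gradient bound \eqref{eq:grad_bound} now reads $\|\nabla\hat{\mathcal L}_t(x_t)\|\le L_f+\lambda e^{\lambda \hat V_T}L_g$, which gives
\[
\mathcal R_T(\hat{\mathcal L})\le \sqrt{2T}\|\mathcal X\|L_f+\sqrt{2T}\|\mathcal X\|L_g\,\lambda e^{\lambda\hat V_T}.
\]

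The main difference from the quadratic case is the memory deviation in \eqref{eq:objective_memory_decom}. In the earlier proof we bounded it by $m^{3/2}L_f\frac{\|\mathcal X\|}{\sqrt2}\sqrt{T\log\delta_T}$ with $\delta_T=\sum_t\|\nabla\hat{\mathcal L}_t\|^2$. We redo that Cauchy–Schwarz / $\log$-telescoping step unchanged. However, $\log\delta_T\le \log T+2\log(L_f+\lambda e^{\lambda\hat V_T}L_g)$ now contains $\lambda\hat V_T$ linearly, roughly $\log T+2\lambda\hat V_T$. We then use $\sqrt{a+b}\le\sqrt a+\sqrt b$, together with $\sqrt{\lambda \hat V_T}\le \tfrac12(1+\lambda\hat V_T)$ or a cruder bound of the form $\sqrt{x}\le e^{x/2}$. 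This lets us dominate the extra term by a multiple of $m^{3/2}\|\mathcal X\|\sqrt{T L_fL_g}\,\lambda e^{\lambda \hat V_T}$, which is where the $\sqrt{L_fL_g}$ in the stated $\lambda$ comes from. Combining with Lemma~\ref{lem:Regret_decom_no_predic} (taking $\hat V_m=0$), we obtain
\[
e^{\lambda\hat V_T}-1+\mathcal R_T^{mc}\le \sqrt{2T}\|\mathcal X\|L_f+c_1 m^{3/2}\sqrt{T\log T}+\lambda e^{\lambda\hat V_T}\,K,
\]
where $K=\sqrt{2T}\|\mathcal X\|L_g+m^{3/2}\|\mathcal X\|\sqrt{TL_fL_g}$.

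With $\lambda=\tfrac{1}{2K}$, the last term equals $\tfrac12 e^{\lambda\hat V_T}$ and is absorbed into the left-hand side. This leaves
\[
\tfrac12 e^{\lambda\hat V_T}+\mathcal R_T^{mc}\le 1+\sqrt{2T}\|\mathcal X\|L_f+c_1m^{3/2}\sqrt{T\log T}.
\]
Since $e^{\lambda \hat V_T}\ge 0$, the regret bound $\mathcal O(\sqrt T+m^{3/2}\sqrt{T\log T})$ follows immediately. For the CCV we use $\mathcal R_T^{mc}\ge -2FT$ (or, more sharply, $-\mathcal O(T)$). This gives $e^{\lambda\hat V_T}\le \mathcal O(T)$, hence $\hat V_T\le \lambda^{-1}\log\mathcal O(T)=2K\,\mathcal O(\log T)=\mathcal O(\sqrt T\log T+m^{3/2}\sqrt T\log T)$. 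The sharper $\sqrt{T\log T}$ leading term claimed in the statement would need a finer lower bound on $\mathcal R_T^{mc}$, as in \cite{sinha2024optimal}. Since $\mathcal V_T^c=\hat V_T$ for memory-less constraints, this is the CCV bound. The condition $m\le T^{1/6}/(\log T)^{1/3}$ only serves to ensure that this beats $T^{3/4}$.

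The main obstacle is the memory-deviation step. Because $\log\delta_T$ depends on $\hat V_T$ through the exponential penalty, the bound is circular. Our plan is to linearize $\sqrt{\log(L_f+\lambda e^{\lambda\hat V_T}L_g)}$ and charge the excess to the $\lambda e^{\lambda\hat V_T}$ term, so that the choice of $\lambda$ can absorb it. If this proves too lossy, the fallback is a case split: if $\lambda e^{\lambda \hat V_T}\le \mathrm{poly}(T)$, then $\log\delta_T=\mathcal O(\log T)$ directly; otherwise $\hat V_T$ is already large enough that the CCV claim has to be argued from the decomposition inequality itself.
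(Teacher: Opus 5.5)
\paragraph{Gap in the absorption step.} Your architecture matches the paper's: the surrogate bound from Lemma~\ref{lem:OGD_adaptive_bound}, Lemma~\ref{lem:Regret_decom_no_predic}, the lift through \eqref{eq:objective_memory_decom}, and absorption with $\lambda=1/(2K)$. The step you flag as the main obstacle, however, does not go through as you describe it.

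Bounding $\log(L_f+\lambda e^{\lambda\hat V_T}L_g)$ by roughly $\lambda\hat V_T$ discards the factor $\lambda$. The memory-deviation excess then becomes about $m^{3/2}L_f\|\mathcal X\|\sqrt{T}\,\sqrt{\lambda\hat V_T}$. Neither $\sqrt{x}\le\tfrac12(1+x)$ nor $\sqrt{x}\le e^{x/2}$ turns this into a bounded multiple of $m^{3/2}\|\mathcal X\|\sqrt{TL_fL_g}\,\lambda e^{\lambda\hat V_T}$. At $\lambda\hat V_T=1$ the former is $\Theta(m^{3/2}\sqrt T)$, while the latter is at most $e\lambda K=\mathcal O(1)$, a mismatch of order $1/\lambda$.

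So the choice $\lambda=1/(2K)$ does not absorb it, and your displayed inequality with $c_1m^{3/2}\sqrt{T\log T}$ is not justified. Your CCV bound, which you read off that same inequality, inherits the problem. The excess is not fatal in size: for instance $\sup_{x\ge0}(A\sqrt{x}-\tfrac14e^{x})=\mathcal O(A\sqrt{\log A})$ would control it. That is a different argument from the one you give, though. Your fallback is also incomplete, since it says nothing about the regret when $\lambda e^{\lambda\hat V_T}$ exceeds $\mathrm{poly}(T)$.

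\paragraph{How the paper proceeds.} The paper differs from you in two places.
\begin{itemize}
\item \emph{Linearization.} It uses $\log(1+\alpha)\le\alpha$ with $\alpha=\tfrac{L_g}{L_f}\Phi'(\hat V_T)$, which keeps $\lambda$ inside the logarithm. The excess becomes $m^{3/2}\|\mathcal X\|\sqrt{TL_fL_g}\,\sqrt{\Phi'(\hat V_T)}$. This is charged to $\Phi'(\hat V_T)$ and absorbed by exactly the stated $\lambda$, which is where $\sqrt{L_fL_g}$ comes from. The paper replaces $\sqrt{\Phi'}$ by $\Phi'$ outright; since $\Phi'$ may be below $1$, you should use $\sqrt{y}\le 1+y$ instead, which only adds $\mathcal O(m^{3/2}\sqrt T)$.
\item \emph{CCV.} It derives the CCV from Lemma~\ref{lem:Regret_decom_no_predic} alone, using $\hat{\mathcal R}_T^{mc}\ge-2FT$. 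This gives $e^{\lambda\hat V_T}(1-\lambda\sqrt{2T}\|\mathcal X\|L_g)\le\sqrt{2T}\|\mathcal X\|L_f+2FT+\mathcal O(1)$, without ever touching the memory deviation.
\end{itemize}

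\paragraph{A quick repair of your fallback.} If you do that CCV step first, your fallback closes immediately. It gives $\Phi'(\hat V_T)=\lambda e^{\lambda\hat V_T}=\mathcal O(\lambda T)=\mathcal O(\sqrt T)$ unconditionally, so your second case is empty. Then $\log\delta_T=\mathcal O(\log T)$, and the memory deviation is $\mathcal O(m^{3/2}\sqrt{T\log T})$ with no absorption needed. This route only uses $\lambda\sqrt{2T}\|\mathcal X\|L_g\le\tfrac12$.

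Finally, the CCV rate you reach, $\mathcal O(\sqrt T\log T+m^{3/2}\sqrt T\log T)$, is exactly what the paper's own proof ends with. The paper uses no finer lower bound on the regret to obtain the $\sqrt{T\log T}$ term in the statement.
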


\begin{proof}
We first bound the \emph{memory-less} problem and then lift the result to the memory–dependent setting via the Lipschitz property of \(f_t\). The only change, compared to our previous analysis, is that we now work with the \emph{exponential} penalty. By the regret–decomposition lemma, we can write:
\begin{equation*}
    \hat{\mathcal{R}}_T^{mc}\leq \sqrt{2 T} \|\mathcal{X}\| L_f + \Phi'(\hat{V}_T) \left(\sqrt{2T} \|\mathcal{X}\|\right) L_g -\Phi(\hat{V}_T).
\end{equation*}

As before, we define $\delta_t\!\doteq\!\sum_{t=1}^t \|\nabla \hat{\mathcal{L}}_t(x_t)\|^2$ to bound the memory dependent term, and we have:
\begin{align*}
	\c{R}_{T}^{mc} &\leq  \hat{\c{R}}_{T}^{mc}+  \sum_{t=m}^{T}| f_t(x_{t-m},\dots,x_{t})-f_t(x_{t},\dots,x_t) |\\
   &\leq  \sqrt{2 T} \|\mathcal{X}\| L_f + \Phi'(\hat{V}_T) (\sqrt{2T} \|\mathcal{X}\|) L_g -\Phi(\hat{V}_T)\\
    & + m^{\frac{3}{2}} L_f  \frac{\|\mathcal{X}\|}{\sqrt{2}}   \sqrt{T\log(\delta_T)}.
\end{align*}

It holds:
\begin{align*}
     & \sqrt{T\log \delta_T} 
    \leq \sqrt{T} \sqrt{\log \left(T \left(L_f+ L_g \Phi'(\hat{V}_T)\right)^2\right) }\\
   & \leq \sqrt{T}\sqrt{\log T+ 2\log L_f + 2 \log\Big(1+\frac{L_g}{L_f} \Phi'(\hat{V}_T)\Big)}\\
  &\stackrel{(i)}\leq   \sqrt{T} \left(\sqrt{\log T} + \sqrt{2\log L_f} + \sqrt{2\frac{L_g}{L_f}\Phi'(\hat{V}_T)}\right) 
\end{align*}
where in $(i)$ we used that $\log(1+\alpha)\leq \alpha$. 

Thus, we can write:
\begin{align*}
    &\mathcal{R}_T^{mc}\leq \sqrt{2 T} \|\mathcal{X}\| L_f + \Phi'(\hat{V}_T) \left(\sqrt{2T} \|\mathcal{X}\|\right) L_g \\
    &\!+\!m^{\frac{3}{2}} L_f \frac{\|\mathcal{X}\|}{\sqrt{2}} \sqrt{T}\left(\!\sqrt{\log T}\!+\!\sqrt{2\log L_f} +\! \sqrt{2\frac{L_g}{L_f}}\Phi'(\hat{V}_T)\!\right) \\
    &-\Phi(\hat{V}_T)+\Phi(\hat{V}_m)\\
   & \leq \sqrt{2T}\|\mathcal{X}\|L_f +m^{\frac{3}{2}} L_f \frac{\|\mathcal{X}\|}{\sqrt{2}} \sqrt{T\log T} \\
   &+ \Phi(\hat{V}_m)+ m^{\frac{3}{2}} L_f \|\mathcal{X}\|\sqrt{\log L_f} + \Phi'(\hat{V}_T) \sqrt{2T}\|\mathcal{X}\|L_g\\
   &  + m^{\frac{3}{2}}  \|\mathcal{X}\| \sqrt{TL_fL_g}\Phi'(\hat{V}_T) -\Phi(\hat{V}_T)\\
   & \leq \sqrt{2T}\|\mathcal{X}\|L_f +m^{\frac{3}{2}} L_f \frac{\|\mathcal{X}\|}{\sqrt{2}} \sqrt{T\log T }  + \exp(\lambda \hat{V}_m)\\
   &+ m^{\frac{3}{2}} L_f \|\mathcal{X}\|\sqrt{\log L_f} \\
   &+\exp(\lambda \hat{V}_T) \Big(\lambda \sqrt{2T}\|\mathcal{X}\|L_g  + \lambda  m^{\frac{3}{2}}  \|\mathcal{X}\| \sqrt{TL_fL_g} -1\Big)
\end{align*}
Knowing that 
\[
\lambda= \frac{0.5}{\sqrt{2T}\|\mathcal{X}\|L_g  +  m^{\frac{3}{2}}  \|\mathcal{X}\| \sqrt{TL_fL_g} },
\]
we get:
\begin{align}\label{eq:small-m-regret}
\!\mathcal{R}_T^{mc}&\!\leq\!\sqrt{2T}\|\mathcal{X}\|L_f +m^{\frac{3}{2}} L_f \frac{\|\mathcal{X}\|}{\sqrt{2}} \sqrt{T\log T}\!+\!\exp(\lambda \hat{V}_m)\notag\\
   &+ m^{\frac{3}{2}} L_f \|\mathcal{X}\|\sqrt{\log L_f} \\
   &= \mathcal{O}\left(\sqrt{T}+ m^{3/2}\sqrt{T\log T}\right)
\end{align}

Now we bound the CCV. Recall that $\hat{V}_T$ coincides with $\mathcal{V}_T^{c}$. Invoking the regret-decomposition lemma \ref{lem:Regret_decom_no_predic}, and knowing that $\hat{\mathcal{R}}_T^{cm} \geq -2FT$, gives:
\begin{align*}
    \Phi(\hat{V}_T) &\leq \mathcal{R}_T(\hat{\mathcal{L}}) +2FT + \Phi(\hat{V}_m)\\
    &\leq  \sqrt{2T}\|\mathcal{X}\| L_f +\Phi'(\hat{V}_T) L_g\sqrt{2T}\|\mathcal{X}\| +2FT
\end{align*}

Thus we have: 
\begin{align*}
     \exp(\lambda \hat{V}_T)\left(1-\lambda L_g \|\mathcal{X}\|\sqrt{ 2T}\right)&\leq \|\mathcal{X}\| L_f \sqrt{2T} +2FT.
\end{align*}
And since $\lambda= \frac{0.5}{\sqrt{2T}\|\mathcal{X}\|L_g  +  m^{\frac{3}{2}}  \|\mathcal{X}\| \sqrt{TL_fL_g} } $, it is: $$\frac{1}{2}\leq \left(1-\lambda L_g \|\mathcal{X}\|\sqrt{2T}\right).$$
This gives:
\begin{align}\label{eq:small-m-ccv}
    \frac{1}{2}  \exp(\lambda \hat{V}_T)&\leq \|\mathcal{X}\| L_f \sqrt{2T} +2FT\Rightarrow\nonumber\\
  \hat{V}_T&\leq \frac{1}{\lambda} \log\Big(\|\mathcal{X}\| L_f \sqrt{2T} +2FT\Big)\nonumber\\
  &\leq 2 \Big(\sqrt{2T}\|\mathcal{X}\|L_g  +  m^{\frac{3}{2}}  \|\mathcal{X}\| \sqrt{TL_fL_g} \Big)\nonumber\\
  &\times \log\Big(\|\mathcal{X}\| L_f \sqrt{2T} +2FT\Big). 
\end{align}
And as mentioned before, it is $\mathcal{V}_T^c= \hat{V}_T$, thus $$\mathcal{V}_T^c= \mathcal{O}\Big(\sqrt{T}\log T + m^{\frac{3}{2}}\sqrt{T}\log(T)\Big)$$
This completes the proof.

\end{proof}

\textbf{Discussion.} The regret bound matches that of Theorem \ref{thm:obj-memory}, but the CCV changed to \(\mathcal{O}\bigl(m^{3/2}\sqrt{T}\log T\bigr)\), which is strictly
smaller than the \(\mathcal{O}(T^{3/4})\) rate of
Theorem~\ref{thm:full-memory} when
\(m\;\le\;
  T^{1/6}/({\log T})^{1/3}\).
If \(m\!=\!0\) the bounds reduce to
\(\mathcal{O}(\sqrt{T})\) for the regret and
\(\mathcal{O}(\sqrt{T}\log T)\) for CCV, recovering the optimal memory-free rates of \cite{sinha2024optimal}.
\section{ Proofs of Section \ref{sec:COCO with memory and prediction}}\label{app:section5}

We provide here the missing proofs for the results of this section, as well as details regarding the optimistic subroutine from \cite{flaspohler2021online} that is used in Algorithm \ref{algo:memory-coco_predic}. As well as presenting the results and outlining the modifications required to analyze optimistic $\texttt{COCO-M}$.

\subsection{Proof of Lemma \ref{lem:memory-prediction-forward-decomposition}}\label{app:memory-prediction-regret}
We first provide an updated regret decomposition lemma that couples the memory-based surrogate loss, $\mathcal{L}_t(x_{t-m}^{t})$, with the regret and CCV. This result updates the previous decomposition lemma in Sec. \ref{sec:COCO with memory without predic} that uses the memory-less surrogate functions.

\begin{align*}
\Phi\bigl(V_{T}\bigr) - \Phi\bigl(V_{m-1}\bigr)+\mathcal{R}_T^{mc}\leq G \,(m+1)\,\Phi'\bigl(V_{T}\bigr).
\end{align*}

\begin{lemma}[Regret decomposition]
\label{lem:regret-decomp_prediction}
Let $\Phi(\cdot)= \exp(\lambda \cdot)$ be the penalty function, then for any $T \ge m$ and any $u \in \mathcal{X}_T^{m}$ or $\mathcal{X}_T^{mp}$, yields:
\begin{align*}
  &  \Phi\bigl(V_{T}\bigr) - \Phi\bigl(V_{m-1}\bigr)
    +\mathcal{R}_T^{mc}\\
    &\leq
    \mathcal{R}_{T}(\mathcal{L})
    +
    G \,(m+1)\,\Phi'\bigl(V_{T}\bigr).
\end{align*}
\end{lemma}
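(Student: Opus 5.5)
Our plan adapts the penalty-based decomposition of \cite{sinha2024optimal} (Lemma~\ref{lem:Regret_decom_no_predic}) to the memory-based surrogate \eqref{eq:memory-surrogate}. The one new feature is that the multiplier uses the delayed value $V_{t-m-1}$ rather than $V_{t-1}$, and we will pay for that mismatch explicitly.

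\textbf{Step 1 (convexity of $\Phi$).} Since $\Phi$ is convex and $V_t = V_{t-1} + g_t^+(x_{t-m}^t)$, for every $t$ we have
\[
\Phi(V_t)-\Phi(V_{t-1}) \le \Phi'(V_t)\,g_t^+(x_{t-m}^t).
\]
We then replace $\Phi'(V_t)$ by the delayed multiplier:
\[
\Phi'(V_t)\,g_t^+ = \Phi'(V_{t-m-1})\,g_t^+ + \bigl(\Phi'(V_t)-\Phi'(V_{t-m-1})\bigr)\,g_t^+ .
\]

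\textbf{Step 2 (telescoping).} Summing from $t=m$ to $T$, the left-hand side telescopes to $\Phi(V_T)-\Phi(V_{m-1})$. We add $\sum_t \bigl[f_t(x_{t-m}^t)-f_t(u,\dots,u)\bigr]$ to both sides, so that $\mathcal R_T^{mc}$ appears on the left.

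On the right, the terms $f_t(x_{t-m}^t)+\Phi'(V_{t-m-1})g_t^+(x_{t-m}^t)$ form exactly $\mathcal L_t(x_{t-m}^t)$. For the comparator $u\in\mathcal X_T^m$ (or $\mathcal X_T^{\mathrm{mp}}\subset \mathcal X_T^m$, since separability with $g_t^i(u)\le 0$ for all $i$ gives $g_t(u,\dots,u)\le 0$), we have $g_t^+(u,\dots,u)=0$, hence $\mathcal L_t(u,\dots,u)=f_t(u,\dots,u)$. Subtracting $\sum_t \mathcal L_t(u,\dots,u)$ therefore costs nothing, and the main part of the right-hand side is exactly $\mathcal R_T(\mathcal L)$.

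\textbf{Step 3 (main obstacle: the delay term).} It remains to bound
\[
\sum_t \bigl(\Phi'(V_t)-\Phi'(V_{t-m-1})\bigr)\,g_t^+ .
\]
First, $0\le g_t^+\le G$ by Assumption~\ref{ass:convex}. Second, $\Phi'$ is nondecreasing and $V$ is nondecreasing, so each difference is nonnegative and the sum is at most
\[
G\sum_t \bigl(\Phi'(V_t)-\Phi'(V_{t-m-1})\bigr).
\]
Writing each difference as the sum of its $m+1$ one-step increments $\Phi'(V_s)-\Phi'(V_{s-1})$, each increment is counted at most $m+1$ times. This gives a telescoping bound of $(m+1)\bigl(\Phi'(V_T)-\Phi'(V_{m-1})\bigr)\le (m+1)\Phi'(V_T)$. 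Together with Step~2 this yields \eqref{eq:lemma-main-1}.

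The care needed is in the boundary indices, using the convention $V_t=0$ and $f_t=g_t=0$ for $t\le m$. For the exponential penalty, $\Phi'(0)=\lambda>0$, so we simply drop the subtracted term.

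\textbf{Step 4 (passing to the forward function).} We obtain \eqref{eq:lemma-main-2} by reordering the double sum using Assumption~\ref{ass:separable}. We have
\[
\sum_t \mathcal L_t(x_{t-m}^t)=\sum_t\sum_i \bigl[f_t^i(x_{t-i})+\Phi'(V_{t-m-1})\,g_t^{i}(\cdot)^{+}\bigr].
\]
Re-indexing by $s=t-i$ regroups this into $\sum_s Z_s(x_s)$, with boundary terms vanishing by the footnote convention; the same holds for the comparator.

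This step needs $g_t^+\le\sum_i g_t^{i,+}$, which holds by subadditivity of $(\cdot)^+$. For the comparator, $g_t^{i,+}(u)=0$ for all $i$ on $\mathcal X_T^{\mathrm{mp}}$. Hence $\mathcal R_T(\mathcal L)\le \mathcal R_T(Z)$, which completes the argument.
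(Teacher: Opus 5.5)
Your proposal is correct and follows the paper's proof essentially line by line. The shared steps are the convexity bound $\Phi(V_t)-\Phi(V_{t-1})\le \Phi'(V_t)\,g_t^+(x_{t-m}^t)$, swapping in the delayed multiplier $\Phi'(V_{t-m-1})$ so that $\mathcal{L}_t$ appears, using $g_t^+(u,\dots,u)=0$, and telescoping. Your Step 3 adds the counting argument the paper only asserts: each one-step increment of $\Phi'(V_s)$ appears in at most $m+1$ of the differences $\Phi'(V_t)-\Phi'(V_{t-m-1})$. Your Step 4 goes beyond this lemma and reproduces the second inequality of Lemma~\ref{lem:memory-prediction-forward-decomposition}.
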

\begin{proof}[Proof of Lemma \ref{lem:regret-decomp_prediction}]
First, by the convexity of $\Phi$, we can write:
\begin{align*}
\Phi(V_{t}) 
&\leq \Phi(V_{t-1}) + \Phi'(V_{t}) g_t^+(x_{t-m}, \dots, x_t).
\end{align*}
Adding on both sides $f_t(x_{t-m}^t) - f_t(u, \dots, u)$, we get:
\begin{align*}
&\Phi(V_{t}) - \Phi(V_{t-1}) + f_t(x_{t-m}^t) - f_t(u, \dots, u) \\
&\leq f_t(x_{t-m}^t) - f_t(u, \dots, u) 
+ \Phi'(V_{t}) g_t^+(x_{t-m}^t) \\
&\leq\Big[ f_t(x_{t-m}^t)+\Phi'(V_{t-m-1})g^+(x_{t-m}^t)\Big]\\
&- \Big[f_t(u, \dots, u) + \Phi'(V_{t-m-1}) g_t^+(u, \dots, u)\Big]\\
&+g_t^+(x_{t-m}, \dots, x_t) \Big[ \Phi'(V_{t}) -\Phi'(V_{t-m-1})\Big]
\end{align*}
where we used that $g_t^+(u,\dots,u)=0$ (since $u \in \mathcal{X}_T^{m}$). Telescoping from $t=m $ to $T$, we have:
\begin{align*}
&\Phi(V_{T})-\Phi(V_{m-1}) + \mathcal{R}_T^{mc}\\
&\leq \mathcal{R}_T(\mathcal{L})+\sum_{t=m}^{T}g_t^+(x_{t-m}^t)\Big[ \Phi'(V_{t}) -\Phi'(V_{t-m-1})\Big]\\
&\leq \mathcal{R}_T(\mathcal{L}) + G (m+1) \Phi'(V_{T})
\end{align*}

\end{proof}

Having established this lemma, we now prove Lemma \ref{lem:memory-prediction-forward-decomposition} by bounding $\mathcal{R}_T(\mathcal{L})$ with the regret $\mathcal{R}_T(Z)$ of the forward function.
\begin{proof}[Proof of Lemma \ref{lem:memory-prediction-forward-decomposition}]
\begin{align*}
&\sum_{t=m}^{T}\mathcal{L}_t\!\bigl(x_{t-m
}^t\bigr)=\sum_{t=m}^{T}f_t\!\bigl(x_{t-m}^{t}\bigr)+\Phi'\!\bigl(V_{t-m-1}\bigr)\,g_t^{+}\!\bigl(x_{t-m}^{t}\bigr) \\[2pt]
  &\stackrel{(i)}\le\sum_{t=m}^{T}\sum_{i=0}^{m}
        \Bigl[
          f_t^{i}(x_{t-i})
          +\Phi'\!\bigl(V_{t-m-1}\bigr)\,g_{t}^{+,i}(x_{t-i})
        \Bigr] \\[2pt]
  &=\sum_{t=m}^{T}\sum_{i=0}^{m}
        \Bigl[
          f_{t+i}^{i}(x_{t})
          +\Phi'\!\bigl(V_{t-m-1+i}\bigr)\,g_{t+i}^{+,i}(x_{t})
        \Bigr]\\
        &\stackrel{(ii)}=\sum_{t=m}^{T}Z_t(x_t),
\end{align*}
where in $(i)$ we used that $g_t^+\leq \sum_{i}g_t^{+,i}$, and in $(ii)$ we used the assumption that $f_t(\cdot)=g_t(\cdot)=0$, $\forall t\le m, t>T$. And because of the considered benchmark $ u \in \mathcal{X}_T^{mp}$, we have
\[
\sum_{t=m}^{T}Z_t(u)=
\sum_{t=m}^{T}\mathcal{L}_t(u,\dots,u).
\]
Replacing this result above, concludes the proof. 
\end{proof}

\subsection{ODAFTRL update rule}\label{app:ODAFTRL-detialed}

At the end of round~$t$ the learner hands ODAFTRL three objects:

\begin{itemize}
\item the cumulative \emph{revealed} gradients $\nabla Z_{1:t-m}$;
\item the \emph{hint} $h_{t+1}$, which fills in all delayed and still-unseen gradients for round $t+1$ of the forward function $Z$;
\item the most recent prediction error $\epsilon_{t-m}$.
\end{itemize}

Given these inputs the algorithm chooses
\begin{align}
x_{t+1}&\leftarrow\texttt{ODAF}\!\bigl(\nabla Z_{1:t-m},h_{t+1},\epsilon_{t-m}\bigr)\\
      &=\arg\min_{x\in\mathcal X}
      \Bigl\langle \nabla Z_{1:t-m}+h_{t+1},\,x\Bigr\rangle
      +\mu_{t+1}\,r(x). \nonumber
\end{align}
where $r$ is a nonnegative 1-strongly convex function,  that is selected according to the geometry of $\mathcal X$. 
\noindent
\emph{Prediction-error–adaptive regulariser.}
ODAFTRL calibrates the weight
\[
\mu_{t+1}=
\frac{2}{\alpha}\max_{j\le t-m-1}a_{j-m+1:j}
\;+\;
\frac{1}{\alpha}\sqrt{\sum_{i=1}^{t-m}\bigl(a_i^{2}+2\alpha b_i\bigr)},
\]
with
\begin{align*}
b_{t}
&=
\mathrm{huber}\Bigl(\bigl\lVert h_{t}\;-\;\nabla Z_{t-m:t}\bigr\rVert,\,\lVert \nabla Z_{t}\rVert\Bigr),
\\
a_{t}
&=
\|\mathcal{X}\|\,\min\Bigl(\bigl\lVert h_{t}\;-\;\nabla Z_{t-m:t}\bigr\rVert,\,\lVert \nabla Z_{t}\rVert\Bigr), 
\end{align*}
and $\mathrm{huber}(x,y)=\tfrac12x^{2}-\tfrac12(\lvert x\rvert-\lvert y\rvert)_{+}^{2}$.
When $h_t$ is accurate ($h_t\approx \nabla Z_{t-m:t}$) we have
$a_i,b_i\!\downarrow0$ and therefore $\mu_{t+1}\!\downarrow$, so the
update is \emph{aggressive}, dominated by the linear loss and giving
more weight to predictions.
Conversely, large errors inflate $a_i,b_i$, hence
$\mu_{t+1}\!\uparrow$; the strongly-convex term then tempers the step,
yielding a \emph{conservative} move that avoids over-reacting to unreliable hints. All functional details and parameter choices follow \cite{flaspohler2021online}, and we note that one can also use as subroutine in Algorithm 2 \ref{algo:memory-coco_predic} another optimistic-delay algorithm variant from that work.

\subsection{Proof of Theorem \ref{Theom:memory-prediction}}\label{app:memory-prediction}

Before proving Theorem \ref{Theom:memory-prediction} we record the performance
guarantees (a direct consequence of \cite[Theorem 11]{flaspohler2021online}) of ODAFTRL, and then a corollary relating the bounds to the prediction errors.

\begin{theorem}[Theorem 11 of \cite{flaspohler2021online}]
Fix \(\alpha > 0\). Denote by $|X|=\mathrm{diam}(X)$  and define
\begin{align*}
b_{t}
&=
\mathrm{huber}\!\Bigl(\bigl\lVert h_{t}\;-\;\nabla Z_{t-m:t}\bigr\rVert,\,\lVert \nabla Z_{t}\rVert\Bigr),
\\
a_{t}
&=
\|\mathcal{X}\|\min\!\Bigl(\bigl\lVert h_{t}\;-\;\nabla Z_{t-m:t}\bigr\rVert,\,\lVert \nabla Z_{t}\rVert\Bigr), 
\end{align*}
where $\mathrm{huber}(x,y)
\;=\;$
\[
\tfrac{1}{2}\,x^{2}
\;-\;
\tfrac{1}{2}\,\bigl(\lvert x\rvert \;-\;\lvert y\rvert\bigr)_{+}^{2}
\;\;\le\;
\min\!\Bigl(\tfrac{1}{2}\,x^{2},\,|y|\,\lvert x\rvert\Bigr).
\]
Consider the delayed upper bound (DUB) sequence
\[
\mu_{t+1}
=
\frac{2}{\alpha}\,\max_{j\le t-m-1}a_{j-m+1:j}
+
\frac{1}{\alpha}\,\sqrt{\sum_{i=1}^{t-m}\!\Bigl(a_{i}^{2}\;+\;2\,\alpha\,b_{i}\Bigr)}.
\]
If \(r\) is nonnegative, then  the ODAFTRL iterates satisfy that 
\begin{align*}
\mathcal{R}_{T}(Z)
&\le\Bigl(\frac{r(u)}{\alpha}+1\Bigr)\\
&\times \biggl(
2\,\max_{t\in[T]}a_{t-m:t-1}
+
\sqrt{\sum_{t=1}^{T}\!\Bigl(a_{t}^{2}+2\,\alpha\,b_{t}\Bigr)}
\biggr).
\end{align*}

\end{theorem}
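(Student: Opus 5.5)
The plan is to treat this statement as an instance of the generic delayed-optimistic FTRL guarantee, applied to the linear surrogate sequence $\langle \nabla Z_t, x\rangle$. Most of the work is checking that our forward-function construction satisfies the hypotheses under which \cite{flaspohler2021online} prove their Theorem 11. We do not try to re-derive that result from scratch.

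\textbf{Step 1: linearize.} By Assumption~\ref{ass:linearity}, each $f^i_{t+i}$ is linear. With the multipliers $\Phi'(V_{t-m-1+i})$ frozen, $Z_t$ is convex; in the linear case the $g^{i,+}_{t+i}$ terms are piecewise linear, so we use subgradients. Hence
\[
\mathcal R_T(Z)\le\sum_t\langle\nabla Z_t,\,x_t-u\rangle .
\]
It therefore suffices to bound regret on linear losses with gradient vectors $\nabla Z_t$. The subtle point is that $\Phi'(V_{t-m-1+i})$ depends on past play. We will argue that this is harmless: FTRL regret bounds hold for adaptively chosen linear losses, because the bound is deterministic given the realized gradient sequence.

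\textbf{Step 2: delay structure.} We check that, at the moment $x_{t+1}$ is chosen, the learner knows exactly $\nabla Z_{1:t-m}$, i.e.\ the feedback has a fixed delay $m$. By construction, the hint $h_{t+1}$ stands in for the unrevealed block $\nabla Z_{t-m+1:t+1}$. So the update
\[
x_{t+1}=\arg\min_{x\in\mathcal X}\langle \nabla Z_{1:t-m}+h_{t+1},x\rangle+\mu_{t+1}r(x)
\]
is precisely ODAFTRL in the sense of \cite{flaspohler2021online}.

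\textbf{Step 3: recall the internal argument.} If one wants to reproduce it, the delayed optimistic FTRL is an undelayed optimistic FTRL whose "hint" is $h_{t+1}$ in place of the missing gradients. The standard OFTRL lemma then gives
\[
\mathcal R_T(Z)\le \mu_T r(u)+\sum_t\Bigl[\langle \nabla Z_t, x_t-x_{t+1}\rangle-\tfrac{\mu_t}{2}\|x_t-x_{t+1}\|^2\Bigr]+\text{drift terms}.
\]
Each per-round term is bounded by both $a_t$ (from the diameter $\|\mathcal X\|$ times the minimum of the hint error and the gradient norm) and $b_t/\mu_t$ (from the Huber-type optimization over the step length).

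\textbf{Step 4: the main obstacle.} The hardest part is the adaptive, delay-aware tuning. We need $\mu_{t+1}$ to be computable from information available at time $t$, which only includes $a_i,b_i$ for $i\le t-m$. The missing $m$ terms are then absorbed by the $\frac{2}{\alpha}\max_j a_{j-m+1:j}$ correction. I would invoke the "delayed upper bound" lemma of \cite{flaspohler2021online}: a DUB regularizer sequence satisfies $\mu_t\ge$ the undelayed AdaHedge-style optimum, minus the at most $m$ outstanding terms. Summing, we get
\[
\mu_T r(u)+\sum_t\min\!\bigl(a_t,\,b_t/\mu_t\bigr)\le\Bigl(\tfrac{r(u)}{\alpha}+1\Bigr)\Bigl(2\max_t a_{t-m:t-1}+\sqrt{\textstyle\sum_t(a_t^2+2\alpha b_t)}\Bigr).
\]
Nonnegativity of $r$ is used to drop the initial regularizer term.

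Since all of these ingredients are established in that reference and our quantities match theirs verbatim, the proof reduces to citing \cite[Theorem 11]{flaspohler2021online} after Steps 1--2.
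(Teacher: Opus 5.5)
Your proposal is correct and takes essentially the paper's approach: the paper gives no independent derivation and records the bound as a direct consequence of \cite[Theorem 11]{flaspohler2021online}, applied to the linearized forward losses $\langle \nabla Z_t, x\rangle$ with feedback delay $m$ and hint $h_{t+1}$ standing in for $\nabla Z_{t-m+1:t+1}$. Your Steps 1--2 (convexity and linearization, the adaptively chosen multipliers, the exact-$m$ delay) are the right hypotheses to check; the Step 3--4 sketch is optional and somewhat loose, since the per-round OFTRL term involves the hint error $\nabla Z_{t-m:t}-h_t$ rather than $\nabla Z_t$ alone, but nothing in your argument rests on it.
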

Based on this theorem we have the following corollary:
\begin{corollary}\label{corollary:Regret_linear}
   Using ODAFTRL on the Forward function $Z_t$ yields to the following regret bound:
    \begin{align*}
       \mathcal{R}_T(Z)& \leq  (\frac{r(u)}{\alpha} +1) \Big(m\|\mathcal{X}\|+\sqrt{\|\mathcal{X}\|^2+\alpha} \Big)\\
       &\times \sqrt{\sum_{t=m}^{T}||h_t-\nabla Z_{t-m:t}||^2}.
    \end{align*}
\end{corollary}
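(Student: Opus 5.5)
The plan is to start from the Theorem 11 bound of \cite{flaspohler2021online} as restated above and simplify its two bracketed terms separately, writing $e_t\doteq\|h_t-\nabla Z_{t-m:t}\|$ for the hint error. Only the trivial halves of the two minima are needed. By definition, $a_t=\|\mathcal X\|\min(e_t,\|\nabla Z_t\|)\le \|\mathcal X\| e_t$, and the stated Huber inequality gives $b_t\le \tfrac12 e_t^2$. Rounds $t<m$ carry no forward-function mass (since $f_t=g_t=0$ for $t\le m$), so every sum may be taken over $t=m,\dots,T$.

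For the square-root term,
\begin{align*}
\sum_{t}\bigl(a_t^2+2\alpha b_t\bigr)\le \sum_t\bigl(\|\mathcal X\|^2 e_t^2+\alpha e_t^2\bigr)=(\|\mathcal X\|^2+\alpha)\sum_t e_t^2 .
\end{align*}
Hence this term is at most $\sqrt{\|\mathcal X\|^2+\alpha}\,\sqrt{\sum_t e_t^2}$, which is the second summand of the claimed bound.

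For the max term, $a_{t-m:t-1}$ is a sum of $m$ consecutive values $a_s$. Bounding each term gives $a_{t-m:t-1}\le \|\mathcal X\|\sum_{s=t-m}^{t-1}e_s$. Cauchy--Schwarz then gives $\sum_{s=t-m}^{t-1}e_s\le \sqrt{m}\,\sqrt{\sum_s e_s^2}$, and a cruder estimate gives $\sum_{s=t-m}^{t-1}e_s\le m\max_s e_s\le m\sqrt{\sum_s e_s^2}$. Either way,
\begin{align*}
2\max_t a_{t-m:t-1}\le 2m\|\mathcal X\|\sqrt{\textstyle\sum_{t=m}^T e_t^2}.
\end{align*}
The Cauchy--Schwarz route yields $\sqrt m$, which is stronger than needed. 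Factoring out $\sqrt{\sum_t e_t^2}$ and multiplying by the common factor $(r(u)/\alpha+1)$ then gives the corollary.

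The main obstacle is a constant: the direct estimate produces $2m\|\mathcal X\|$, whereas the claimed first summand is $m\|\mathcal X\|$. My first attempt will use the $\sqrt m$ estimate, so the term is $2\sqrt m\|\mathcal X\|$, which is at most $m\|\mathcal X\|$ whenever $m\ge4$. For small $m$, I would either absorb the factor $2$ into the $\mathcal O(\cdot)$ used downstream, or reuse the observation that the window $t-m:t-1$ has at most $m$ nonzero error contributions. Beyond that, I only need to check that the index ranges in Theorem 11 (from $1$ to $T$) match the corollary's range from $m$ to $T$, which follows from the zero-padding convention on $f_t$ and $g_t$.
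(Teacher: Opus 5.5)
Your route is the same as the paper's. You bound $a_t\le\|\mathcal X\|e_t$ and $b_t\le\tfrac12e_t^2$, which collapses the square-root term to $\sqrt{\|\mathcal X\|^2+\alpha}\,\sqrt{\sum_t e_t^2}$. You then bound the window sum $a_{t-m:t-1}$ by a multiple of the same root. You are also right about the constant. The restated Theorem~11 carries $2\max_t a_{t-m:t-1}$, but the paper proves $\max_t a_{t-m:t-1}\le m\|\mathcal X\|\sqrt{\sum_t e_t^2}$ and then drops the $2$ without comment. Its argument therefore really yields $2m\|\mathcal X\|$, not $m\|\mathcal X\|$. Your Cauchy--Schwarz step sharpens this to $2\sqrt m\,\|\mathcal X\|$, which does imply the stated bound for $m=0$ (empty window) and for $m\ge4$.

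The gap is $m\in\{1,2,3\}$, and neither of your patches closes it.
\begin{itemize}
\item Absorbing the $2$ into an $\mathcal O(\cdot)$ proves a different statement.
\item The observation that each window contains at most $m$ error terms is exactly what produced the factor $m$ (or $\sqrt m$); it says nothing about the leading $2$.
\end{itemize}
In fact, no argument that only post-processes the Theorem~11 bound can give the stated constant for these $m$. Write $E\doteq\sqrt{\sum_t e_t^2}$. Take $e_s=E/\sqrt m$ with $\|\nabla Z_s\|\ge e_s$ on a single window $s=t-m,\dots,t-1$, and $e_s=0$ elsewhere. Then $a_s=\|\mathcal X\|e_s$ and $b_s=\tfrac12e_s^2$. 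Theorem~11's right-hand side becomes $(\tfrac{r(u)}{\alpha}+1)\bigl(2\sqrt m\,\|\mathcal X\|+\sqrt{\|\mathcal X\|^2+\alpha}\bigr)E$. This strictly exceeds the corollary's right-hand side whenever $2\sqrt m>m$. The right move is to state the corollary with $2\sqrt m\,\|\mathcal X\|$ (or $2m\|\mathcal X\|$) in place of $m\|\mathcal X\|$. This only changes the constant $C$ in the proof of Theorem~\ref{Theom:memory-prediction}, so the downstream rates are unaffected.

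A smaller point concerns why the sums may start at $t=m$. Your zero-padding justification is not valid: for $t<m$, $Z_t$ contains terms $f^{i}_{t+i}$ with $t+i>m$, so it need not vanish. The correct reason is that the forward-function regret in Lemma~\ref{lem:memory-prediction-forward-decomposition} is summed over $t=m,\dots,T$. ODAFTRL is thus run from round $m$, and Theorem~11 is applied with its time index shifted. The paper is equally loose here: its proof sums from $1$, while its statement sums from $m$.
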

\begin{proof}[Proof of Corollary]
From the definitions, we have:
\begin{equation*}
a_t^2\leq \|\mathcal{X}\|^2 \cdot \left\| h_t - \nabla Z_{t-m:t} \right\|^2\!, 
\!\!\quad b_t
\leq \frac{1}{2} \cdot \left\| h_t - \nabla Z_{t-m:t}\right\|^2\!.
\end{equation*}

Hence, we can write:
\[
\sqrt{\! \sum_{t=1}^{T} \left( a_t^2 + 2\alpha b_t \right) }
\leq\!
\sqrt{ \left( \|\mathcal{X}\|^2 + \alpha \right) \sum_{t=1}^{T} \left\| h_t - \nabla Z_{t-m:t} \right\|^2 }.
\]
Similarly, 
\begin{align*}
   \max_{t\in \c T}a_{t-m:t-1} &\leq    \max_{t\in \c T} \sum_{j=t-m}^{t-1}||h_j - \nabla Z_{j-m:j}||\\
   &\leq m \|\mathcal{X}\|  \max_{t\in \c T}||h_t-\nabla Z_{t-m:t}|| \\
   &\leq m\|\mathcal{X}\|\sqrt{\sum_{t=1}^{T}||h_t-\nabla Z_{t-m:t}||^2}
\end{align*}
\end{proof}
Now we return back to prove theorem \ref{Theom:memory-prediction}.
\begin{proof}[Proof of Theorem]
 Using the inequality \( \|x + y\|^2 \leq 2\|x\|^2 + 2\|y\|^2 \), we split the error:
\begin{align*}
&\mathcal{E}_T(Z) 
\leq 2 \sum_{t=m}^{T}
\left\| \sum_{s=t-m}^{t} \sum_{i=t-s}^{m} \left( \nabla \tilde{f}_{s+i}^{i} - \nabla f_{s+i}^{i} \right) \right\|^2
\\&+ 2 \sum_{t=m}^{T}
\left\| \sum_{s=t-m}^{t} \sum_{i=t-s}^{m} \Phi'(V_{s-m-1+i}) \left( \nabla \tilde{g}_{s+i}^{+,i} - \nabla g_{s+i}^{+,i} \right) \right\|^2
\end{align*}
the second summation starts from $i=t-s$ instead of $0$ since not all the elements of $h_t$ will need to be predicted and some of them are known as mentioned before. 
Now, since \( \Phi' \) is non-decreasing and \( V_t \) is increasing, we have:
\[
\Phi'(V_{s-m-1+i}) \leq \Phi'(V_{T}) \quad \text{for all } s \leq t \leq T.
\]
Thus,
\[
\mathcal{E}_T(Z) \leq 2 \left( \mathcal{E}_T(f) + \big( \Phi'(V_{T}) \big)^2 \mathcal{E}_T\left(g^+\right) \right).
\]
Referring back to Lemma \ref{lem:memory-prediction-forward-decomposition}, we get:
\begin{align*}
    &\Phi(V_{T}) - \Phi(V_{m-1}) +\mathcal{R}_T^{mc}\\
    & \leq \mathcal{R}_T(Z) + G(m+1)\Phi'(V_{T})\\
      &\leq \left(\frac{r(u)}{\alpha} +1\right) \Big(m|X|+\sqrt{|X|^2+\alpha} \Big) \sqrt{\mathcal{E}_T(\mathcal{L})}\\
      &+ G(m+1)\Phi'(V_{T})\\
          &\leq  C \Big(\sqrt{\mathcal{E}_T(f)} + \Phi'(V_{T}) \sqrt{\mathcal{E}_T(g^+)}\Big)\\
    &+ G(m+1)\Phi'(V_{T})
    \end{align*}
And thus we have:
\begin{align*}
    \mathcal{R}_T^{mc}&\leq -\exp(\lambda V_{T}) +2+C\sqrt{\mathcal{E}_T(f)} \\
    &+ \lambda C \exp(\lambda V_{T}) \sqrt{\mathcal{E}_T(g^+)}+\lambda G(m+1) \exp(\lambda V_{T})\\
    &\leq 2+ C \sqrt{\mathcal{E}_T(f)}\\
    &+  \exp\left(\lambda V_{T}\right)\left(-1 +\lambda \left(C\sqrt{\mathcal{E}_T(g^+)} + G (m+1)\right)\right)\\
   &=\mathcal{O}(\sqrt{\mathcal{E}_T(f)}),
\end{align*}

and knowing that $\mathcal{R}_T^{mc}\geq -2FT$ we have: 
\begin{align*}
  \exp(\lambda V_{T})  & \leq \frac{ 2FT+2 +C \sqrt{\mathcal{E}}_f}{1-\lambda (C\sqrt{\mathcal{E}_T(g^+)} + G (m+1)) }\Rightarrow\\
V_{T} &\leq \frac{1}{\lambda} \log \Big( \frac{2FT+2 +C \sqrt{\mathcal{E}}_f}{1-\lambda (C\sqrt{\mathcal{E}_T(g^+)} + G (m+1))}\Big)\\
&= \mathcal{O}\Big((\sqrt{\mathcal{E}_T(g^+)}+m)\log T\Big)
\end{align*}
Since in this case $V_T\!=\!\mathcal{V}_T^{mc}$, we get the desired result. 
\end{proof}

\subsection{Optimistic $\texttt{COCO-M}$}\label{app:appen_optimi-COCO-M}

In $\texttt{COCO-M}$ the constraint is memory–less,
\(g_t(x_t)\).  The procedure mirrors the $\texttt{COCO-M}^2$ case
with two minor adjustments. We define the surrogate loss
\[
\mathcal L_t\bigl(x_{t-m}^{\,t}\bigr)
=\;
f_t\bigl(x_{t-m}^{\,t}\bigr)
+\Phi'\!\bigl(V_{t-1}\bigr)\,g_t^{+}(x_t),
\]
where $V_t=\mathcal{V}_T^c$, and the forward function is:
\begin{equation*}
Z_t(x_t)
=\sum_{i=0}^{m} f_{t+i}^{(i)}(x_t)
+\Phi'\!\bigl(V_{t-1}\bigr)\,g_t^{+}(x_t),
\end{equation*}
obtained by setting $g_{t}^{(i)}\equiv0$ for $i\ne0$. The gradient used at time \(t\) is
\begin{equation}\label{eq:forward:COCO-M}
\nabla \tilde Z_{t+1}
 =\sum_{i=0}^{m}\nabla f_{t+1+i}^{(i)}
 +\Phi'\!\bigl(V_t\bigr)\,\nabla g_t^{+},
\end{equation}
where \(V_t\) is already known when the update is computed.  And thus using
\(V_{t-1}\) in the surrogate (instead of \(V_{t-m}\)) does not require additional predictions. With this choice, we have:
\begin{align*}
  &  \Phi\bigl(V_{T}\bigr) - \Phi\bigl(V_{m-1}\bigr)
    +\mathcal{R}_T^{mc} \leq    \mathcal{R}_{T}(Z)
    + G \,\Phi'\bigl(V_{T}\bigr).
\end{align*}

Observe that in the forward loss we multiply the penalty term by the \emph{entire} constraint \(g_t(x_t)\), not by a part of it as in
$\texttt{COCO-M}^2$.
We may hence work with the usual benchmark:
\[
\mathcal X_T
=\Bigl\{
      x\in X\;:\;
      g_t(x)\le0
      \quad\text{for all }t\le T
  \Bigr\}.
\]

The algorithm update here is identical to the
$\texttt{COCO-M}^2$ algorithm with three minor changes.
(i)  loss and constraint are observed as in line 3 (not line 4);
(ii) the aggregate violation \(V_t\) is updated according to line 5;
and (iii) in line 8 the predictive term \(h_t\) is formed using the new  \(\nabla \tilde{Z}_t\) as in \eqref{eq:forward:COCO-M}.

Specializing Theorem~\ref{Theom:memory-prediction} to the present setting, it gives:
\begin{align}
    \mathcal{R}_T^{mc} &= \mathcal{O}\left( \sqrt{\mathcal{E}_T(f)} \right), \\
    \mathcal{V}_T^{mc} &= \mathcal{O}\left( \max\left\{ \sqrt{\mathcal{E}_T(g^+)},\, G \right\} \log T \right),
\end{align}
where 
\[
\mathcal{E}_T(g^{+})
\;=\;
\sum_{t=m}^{T}
\bigl\|
\nabla\tilde{g}_{t}
-\nabla g_{t}
\bigr\|^{2},
\]
and the considered benchmark set is $\mathcal{X}_T$. Since $g_t$ depends only on $x_t$, separability is irrelevant on the constraint side.

\begin{algorithm}[t]
\caption{Doubling Trick for Static Regret with Constraint Prediction Errors}
\label{alg:doubling_static}
\begin{algorithmic}[1]
\REQUIRE Real values $T_1,E_1,G(m+1)$; Optimistic meta-algorithm $\mathcal{O}(\lambda)$ for a given value of $\lambda$  
\STATE Initialize: $\mu_1 = \psi(T_1, E_1)$,\quad $\lambda_1 = \frac{1}{2(\mu_1 + c)}$,\quad $N = 1$  
\STATE Set: $\Delta_{(N)} = 0$,\quad $E_{(N)} = 0$,\quad $\mu_{(N)} = \psi(\Delta_{(N)}, E_{(N)})$  
\FOR{round $t = 1$ \TO $T$}
    \IF{$\mu_{(N)} > \mu_N$} 
        \STATE $N \gets N + 1$  
        \STATE $\mu_N \gets 2^{N-1} \mu_1$,\quad $\lambda_N \gets \frac{1}{2(\mu_N + c)}$  
        \STATE Reset: $\Delta_{(N)} \gets 0$,\quad $E_{(N)} \gets 0$
    \ENDIF
    \STATE Run one step of $\mathcal{O}(\lambda_N)$ and observe $f_t, g_t, x_t$
    \STATE $\Delta_{(N)} \gets \Delta_{(N)} + 1$
    \STATE $E_{(N)} \gets E_{(N)} + \varepsilon_t(g^+)$
    \STATE $\mu_{(N)} \gets \psi(\Delta_{(N)}, E_{(N)})$
\ENDFOR
\end{algorithmic}
\end{algorithm}

\subsection{Doubling Trick }\label{appen:doubling_trick}
To tune parameter $\lambda$ without prior knowledge of the cumulative prediction error $\mathcal{E}_T(g^+)$, we employ a doubling trick as suggested in \cite{lekeufack2024optimistic} and detailed in
Algorithm~\ref{alg:doubling_static}. The algorithm begins with an initial complexity estimate $\mu_1$, from which it sets $\lambda_1 = 1/(2(\mu_1 + c))$. At each round, the algorithm tracks an empirical estimate $\mu_{(N)} = \psi(\Delta_{(N)}, E_{(N)})$, where $\Delta_{(N)}$ is the number of rounds in the current epoch and $E_{(N)}$ is the cumulative prediction error observed so far. If the empirical complexity exceeds the current budget $\mu_N$, a new epoch is triggered with $\mu_{N+1} = 2\mu_N$, and the parameter is updated to $\lambda_{N+1} = 1/(2(\mu_{N+1} + c))$. This ensures that, during each epoch (except possibly the final round), the parameter $\lambda_N$ satisfies $\lambda_N \le \lambda_t^\star\!=\! 1/(2(\mu_{(N)}\!+\!c))$, where $\lambda_t^\star$ is the local oracle value based on complexity observed up to $t$. Although $\lambda_N$ may not match the global optimum $\lambda^\star$, it remains valid throughout the epoch, preserving the theoretical guarantees. The total number of restarts is at most $O(\log T)$, which contributes only an additive $O(\log T)$ overhead to regret and CCV bounds.

\end{document}